%% file: main.tex
\documentclass{article}
\usepackage[main,preprint]{neurips_2026}
\usepackage{xcolor}
\usepackage{tikz}
\usetikzlibrary{arrows.meta, positioning, calc, decorations.pathreplacing}
\usepackage{mathtools}
\usepackage{enumitem}

\input{preamble}

\input{macros}

\title{Continuous First, Discrete Later:\\VQ-VAEs Without Dimensional Collapse}

\author{%
  Xinyu Zhao\thanks{Equal contribution.} \\
  MIT \\
  \texttt{lxz@mit.edu} \\
  \And
  Nikita Karagodin\footnotemark[1] \\
  MIT \\
  \texttt{nikitus@mit.edu} \\
  \AND
  Hamed Hassani \\
  University of Pennsylvania \\
  \And
  Sinan Hersek \\
  Google \\
  \And
  Paul Pu Liang \\
  MIT \\
  \And
  Yury Polyanskiy \\
  MIT \\
}

\begin{document}
\maketitle

\input{sections/00_abstract}
\input{sections/01_introduction}

\input{sections/02_background}

\input{sections/03_theory_cleaned}

\input{sections/04_mechanism}
\input{sections/07_experiments}
\input{sections/08_discussion}

\bibliographystyle{plainnat}
\bibliography{references}

\appendix
\input{sections/09_appendix}

\newpage
\input{checklist.tex}

\end{document}

%% file: preamble.tex
\usepackage[utf8]{inputenc}
\usepackage[T1]{fontenc}
\usepackage{microtype}

\usepackage{amsmath, amssymb, amsthm, mathtools}
\usepackage{bm}

\usepackage{graphicx}
\usepackage{booktabs}
\usepackage{subcaption}
\usepackage{float}

\usepackage{xcolor}

\usepackage[colorlinks=true, linkcolor=blue, citecolor=blue, urlcolor=blue]{hyperref}

\usepackage[capitalize, noabbrev]{cleveref}

\usepackage{natbib}

\theoremstyle{plain}
\newtheorem{theorem}{Theorem}

\theoremstyle{definition}

\theoremstyle{remark}

\crefname{theorem}{Theorem}{Theorems}
\crefname{proposition}{Proposition}{Propositions}
\crefname{lemma}{Lemma}{Lemmas}
\crefname{corollary}{Corollary}{Corollaries}
\crefname{definition}{Definition}{Definitions}
\crefname{assumption}{Assumption}{Assumptions}
\crefname{remark}{Remark}{Remarks}
\crefname{section}{Section}{Sections}
\crefname{figure}{Figure}{Figures}
\crefname{equation}{Equation}{Equations}

%% file: macros.tex
\providecommand{\nK}{K}

\providecommand{\xvec}{x}
\providecommand{\zvec}{z}
\providecommand{\zqvec}{z_q}
\providecommand{\xhat}{\hat{x}}
\providecommand{\Wone}{W_1}
\providecommand{\Wtwo}{W_2}

\providecommand{\R}{\mathbb{R}}               %

\providecommand{\Twu}{T_{\mathrm{wu}}}       %

\newcommand{\E}{\mathbb{E}}
\newcommand{\calN}{\mathcal{N}}

\DeclareMathOperator{\diag}{diag}
\DeclareMathOperator{\tr}{tr}
\DeclareMathOperator{\sg}{sg}

\newcommand{\rec}{\mathrm{rec}}
\newcommand{\com}{\mathrm{com}}

\usepackage[colorinlistoftodos]{todonotes}

%% file: sections/00_abstract.tex
\begin{abstract}
\vspace{-2mm}
While many approaches to improve VQ-VAE performance focus on codebook size and utilization, the effect of dimensional collapse, where trained VQ-VAE representations live in an extremely low-dimensional subspace ($1-2\%$ of full rank), remains unaddressed. We show theoretically and empirically that dimension collapse causes a hard loss lower bound that various codebook improvement techniques fail to surpass. Our analytic framework extends the sequential learning effect of \cite{saxe2014exact} by introducing ideas from rate-distortion theory and explains how the latent collapse is caused by the VQ suppressing lower-variance directions. Our theory justifies a simple solution: a ``warm-up phase'' that trains the model as an (unquantized) autoencoder before introducing VQ. On both synthetic experiments and large-scale image (VQGAN) and audio (WavTokenizer) VQ-VAEs, we show that AE Warm-Up successfully restores representation dimension, leading to lower reconstruction and perceptual loss at the same training budget. Across codebook sizes $K \in \{2^{10}, 2^{14}, 2^{16}\}$, AE warm-up raises VQGAN codebook effective dimension from 3--5 to 17--19 and reduces rFID by 17--35\%; on WavTokenizer at $K \in \{2^{13}, 2^{14}\}$, it raises codebook dimension from 4 to 17--19 and improves PESQ by 11--14\%.  We empirically characterize how warm-up duration governs the achievable final loss. In agreement with experiment, our theoretical analysis predicts downstream performance as a function of warm-up length, enabling an adaptive criterion for switching from AE Warm-up to VQ-VAE training.
\footnote{Code for our experiments can be found at
\href{https://github.com/xyz-zy/vqvae_latent_span_collapse}{https://github.com/xyz-zy/vqvae\_latent\_span\_collapse}.}

\end{abstract}

%% file: sections/01_introduction.tex
\vspace{-2mm}
\section{Introduction}
\label{sec:intro}
\vspace{-1mm}

Vector-quantized variational autoencoders (VQ-VAEs) are the backbone of modern discrete tokenization for audio and image representation learning and generative models \citep{vandenoord2017vqvae, razavi2019vqvae2, esser2021vqgan, zeghidour2021soundstream, ji2025wavtokenizer}. Scaling the codebook is the standard lever for improving VQ-VAE reconstruction, and a sustained line of work has improved codebook utilization through techniques such as EMA updates, dead-code respawn, and codebook reparameterization \citep{lancucki2020robust, zheng2023cvqvae, zhu2024vqganlc, zhu2025simvq}. It is thus quite surprising that large-scale VQ-VAEs use only a small fraction of allocated latent dimensions: we find that WavTokenizer on LibriTTS, with a 512-dimensional pre-quantization latent, uses only 4 effective dimensions at codebook size $K=2^{12}$ despite 100\% codebook utilization, or as~\cite{zhang2025dcvq} shows, VQGAN on ImageNet collapses to 2--5 effective dimensions across codebook sizes from $K=2^{10}$ to $K=2^{14}$. This dimensional collapse creates an irreducible loss floor that does not improve with codebook scaling. The bottleneck is not the codebook size, but the effective dimension of the latent subspace. 

To theoretically analyze the mechanism behind collapse, we build on the seminal work of~\cite{saxe2014exact,refinetti2022dynamics}, which demonstrated how plain autoencoders learn the principal modes of their data sequentially, in decreasing order of variance. Unfortunately, extending their idea to VQ-VAE is difficult, since it requires analyzing coupled dynamics of the $K$ codebook vectors and encoder-decoder matrices, complicated by the non-differentiable nearest-neighbor quantizer. Our contribution is the introduction of a tractable model, \textit{RD-AE flow}, which replaces VQ with a stochastic noising operation, chosen to minimize mean-squared error subject to mutual information upper bound of $\log_2 K$. We show that under the optimal coupling, latent directions corresponding to low-variance modes may fall below a so-called \textit{water level} and stop evolving due to interaction between the commitment loss term and the STE, see Fig.~\ref{fig:per-mode-dynamics}, thereby collapsing the latent dimension. 

This model shows that the cause of dimensional collapse is the timing of quantization: latent directions that were not yet properly learned when the quantizer turns on are deactivated by the rate constraint and never recover. Our proposed fix trains the encoder and decoder as a plain autoencoder until enough directions are represented by the encoder before introducing VQ; we call this \emph{AE warm-up}. Our theory predicts dependence of the latent effective dimension on the length of the warm-up, and is corroborated by experiments on VQGAN, see Figure~\ref{fig:vqgan_cb16k_3panel}. AE Warm-up raises the VQGAN codebook dimension from 3 to 16 at $K=2^{14}$ and the WavTokenizer codebook dimension from 4 to 21 at $K=2^{16}$, breaking the loss floor that cold-start training hits regardless of codebook size and restoring expected scaling with $K$. Increasing warm-up steps up to the predicted stopping point both increases final codebook dimension and reduces final loss, with gains scaling with target codebook size.
\begin{figure}[t]
    \centering
    \includegraphics[width=1.0\linewidth]{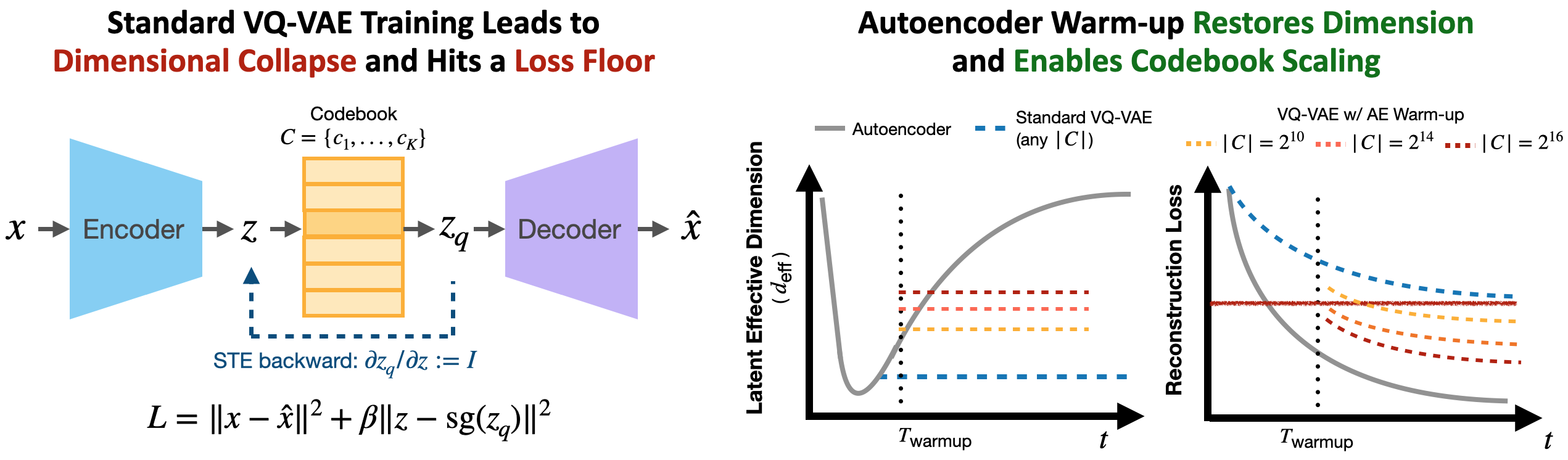}
    \caption{\textbf{Dimensional collapse in VQ-VAEs and the warm-up fix.} During plain autoencoder training (gray), the latent effective dimension $d_{\text{eff}}$ grows as directions of data variance (``modes") are learned sequentially. Turning on the quantizer too early collapses $d_{\text{eff}}$; reconstruction loss hits a floor that is independent of codebook size $|C|$ (blue). Training the encoder--decoder as a plain autoencoder before introducing the quantizer (AE Warm-up) allows the encoder to learn lower-variance directions first, preserving higher $d_{\text{eff}}$ after introducing VQ and restoring the expected scaling of reconstruction quality with $|C|$ (yellow-red). We give an information-theoretic account of this mechanism (\cref{sec:rdae}) and validate it in both image (VQGAN trained with ImageNet-100, \cref{sec:vqgan_validation}) and audio (WavTokenizer trained with LibriTTS, \cref{sec:wavtokenizer_validation}) VQ-VAEs.
    } 
    \label{fig:headline}
\end{figure}

\textbf{Contributions.}
\begin{itemize}[leftmargin=*, topsep=0pt, partopsep=0pt, itemsep=2pt, parsep=0pt]
\item (practice) We demonstrate dimensionality collapse in popular
VQ-VAEs for audio and image
tokenization (WavTokenizer, VQGAN) and show that the resulting performance degradation cannot be fixed
by increasing codebook size $K$ (``loss floor'').
\item (practice) We propose a simple fix, \emph{AE Warm-up}, in which the bottleneck between encoder and decoder remains unquantized during the initial training phase.
\item (theory) We introduce \emph{RD-AE}, a toy model for studying VQ-VAE training dynamics that extends the well-known sequential learning analysis of~\cite{saxe2014exact} to the VQ-VAE setting.
\item (theory+practice) We derive a tight bound on the latent dimensionality and reconstruction loss as a
function of warm-up duration. This allows us to adaptively detect the optimal warm-up duration.
\item (practice) Our approach improves VQGAN (ImageNet-100) and WavTokenizer (LibriTTS), where AE Warm-up raises codebook effective dimension by $5\times$ and restores loss scaling with $K$.
\end{itemize}

\textbf{Prior work.} 
The warm-up approach has been previously proposed in various ways, although without similar justification or connection to latent dimension. \citet{lancucki2020robust} delay quantization as a stability measure within their codebook-utilization method, \citet{zhao2024representation} report empirical gains from AE pretraining followed by VQ fine-tuning, \citet{wang2025tokenbridge} take this to its limit by applying post-hoc quantization to a frozen pretrained continuous VAE without any VQ training, and (concurrently with this work) \citet{earlyquant2026} propose the warm-up under the name \emph{deferred quantization}, though not connecting it to the latent dimension collapse.

%% file: sections/02_background.tex
\section{Background}
\label{sec:background}

\begin{figure}[t]
  \centering
  \vspace{-0mm}
  \includegraphics[width=0.8\linewidth]{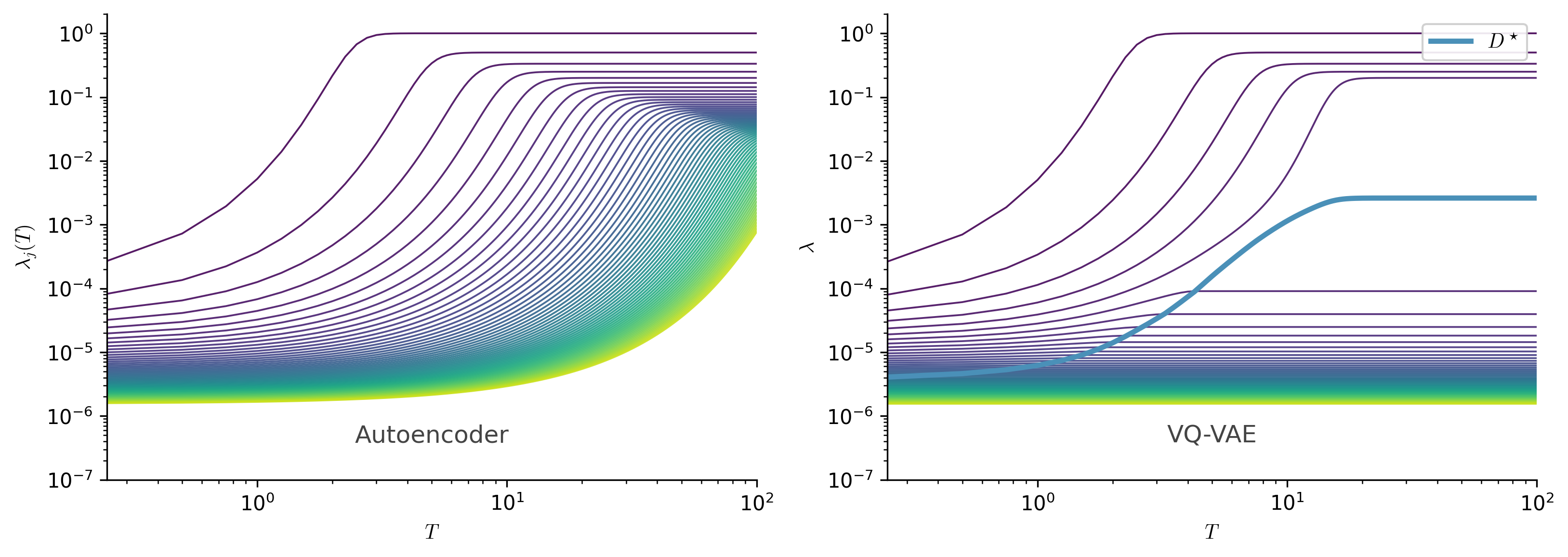}
  \caption{\textbf{Sequential learning in AE and VQ-VAE.} A 2-layer linear AE learns latent modes sequentially~\cite{saxe2014exact} (left). In VQ-VAEs, however, quantizing the bottleneck freezes lower modes (those under the water level $D^*$), thus severely constraining effective dimension (right). See~Section~\ref{sec:rdae} for details.%
}
  \label{fig:per-mode-dynamics}
\end{figure}

\textbf{Autoencoders and vector quantization.} \ \
An autoencoder pairs an encoder $f_\theta : \mathcal{X} \to \mathbb{R}^k$ with a decoder $g_\phi : \mathbb{R}^k \to \mathcal{X}$ and minimizes the reconstruction loss $\|x - g_\phi(f_\theta(x))\|^2$. A vector-quantized autoencoder~\citep{vandenoord2017vqvae} inserts a discretization step between the two: a finite codebook $\mathcal{C} = \{c_1, \dots, c_K\} \subset \mathbb{R}^k$ and a nearest-neighbor quantizer $Q(z) = \arg\min_{c \in \mathcal{C}} \|z - c\|$. The encoder produces a continuous pre-quantization latent $z = f_\theta(x)$; the quantizer replaces it with the nearest codeword $z_q = Q(z)$; the decoder reconstructs $\hat{x} = g_\phi(z_q)$. Training minimizes
\begin{equation}
    \mathcal{L}(\phi,\theta) \;=\; \E[\|x - \hat{x}\|^2 \;+\; \beta \,\|z - \mathrm{sg}(z_q)\|^2],
    \label{eq:vqvae-loss}
\end{equation}
where $\beta$ is the commitment weight and $\mathrm{sg}$ is the stop-gradient and expectation is
over the batch (practice) or population (theory). The encoder receives its gradient through the straight-through estimator~\citep{vandenoord2017vqvae}, $\partial z_q / \partial z := I$. Codebook entries are typically updated by exponential moving averages of assigned latents.

\textbf{Evaluation metrics.}
VQ-VAEs are evaluated on reconstruction quality and on the health of the learned 
codebook. Reconstruction is measured directly via pixel- or sample-level losses 
(L1, L2, mel-spectrogram), and perceptually via task-specific metrics --- LPIPS 
\citep{zhang2018unreasonable} and rFID for images, PESQ \citep{rix2001pesq} and STOI 
\citep{taal2010stoi} for audio. Modern VQ-VAEs additionally include an adversarial discriminator loss \citep{esser2021vqgan, ji2025wavtokenizer}. On the codebook side, \emph{utilization} measures how many allocated codewords are actually in use. 

Even at full utilization, standard VQ-VAE 
representations live in a low-dimensional subspace (typically 4--10 dimensions), as measured by $d_{\mathrm{eff}}$, the number of principal 
components needed to explain $99\%$ of the pre-quantization latent variance:
\begin{equation}
    d_{\mathrm{eff}} = \min\Big\{ m : \textstyle\sum_{i \le m} \lambda_i \big/ 
    \sum_i \lambda_i \ge 0.99 \Big\},
\end{equation}
with $\{\lambda_i\}$ as  the PCA eigenvalues of $z$ in decreasing order. The same 
definition applies to the codebook entries themselves; we report both 
$d_{\mathrm{eff}}(z)$ and $d_{\mathrm{eff}}(\mathcal{C})$ in 
Section~\ref{sec:experiments}.

\textbf{Autoencoder gradient flow.}
\label{sec:ae-gradient-flow}
\citet{saxe2014exact} analyzed the gradient flow of the plain-AE objective
$L_\mathrm{AE}(W_1, W_2) = \E\|x - W_2 W_1 x\|^2$ for the two-layer linear network 
$\hat{X} = W_2 W_1 X$. On the diagonal manifold $W_1 = \diag(u_j)$, $W_2 = \diag(v_j)$, 
the loss decouples as $L_\mathrm{AE}(u, v) = \sum_j \sigma_j^2 (1 - u_j v_j)^2$, and 
the flow acts independently on each mode:
\begin{equation}
  \dot u_j = 2\sigma_j^2\, v_j\,(1 - u_j v_j), \qquad
  \dot v_j = 2\sigma_j^2\, u_j\,(1 - u_j v_j).
  \label{eq:ae-flow}
\end{equation}
With balanced initialization $u_j(0) = v_j(0) = \sqrt{s}$ at small scale $s \in (0,1)$,
the activation for each mode $r_j(t) := u_j(t) v_j(t)$ has the closed form
\begin{equation}
  r_j(t) = \Bigl(1 + \tfrac{1-s}{s}\,e^{-4\sigma_j^2 t}\Bigr)^{-1},
  \label{eq:ae-sigmoid}
\end{equation}
so modes activate sequentially in decreasing order of variance 
(Figure~\ref{fig:per-mode-dynamics}, left). Consequently $d_\text{eff}$ dips as top modes saturate first, then recovers as the rest 
catch up (Figure~\ref{fig:theory-top}, black) — a pattern also visible in large-scale 
VQGAN and WavTokenizer AEs trained with AdamW (Figure~\ref{fig:vqgan_cb16k_3panel} and Figure~\ref{fig:ae_activation}).

%% file: sections/03_theory_cleaned.tex
\section{Theoretical model of collapse}
\label{sec:rdae}

We propose to model VQ-VAE training dynamics by replacing the nearest-neighbor quantizer with the
rate-distortion-optimal continuous channel at rate $R = \log_2 K$, resulting in a model that we
call \emph{RD-AE}. This relaxation makes the joint dynamics of the encoder, decoder, and channel
analytically tractable. We study the framework in the classical linear-Gaussian setting where AE
training dynamics admit closed-form analysis~\citep{saxe2014exact}.  Despite the simplifications,
the resulting toy model 
exhibits dimensional collapse (Figure~\ref{fig:theory-top}). Rigorous analysis predicts that
the collapse results from the interplay of two mechanisms: sequential learning in
autoencoders~\citep{saxe2014exact, refinetti2022dynamics} and a fundamental property of MSE-trained
vector-quantizers to collapse low-variance dimensions (which we model here
information-theoretically via rate-distortion bottleneck). 

This section is organized as follows. RD-AE is introduced in Section~\ref{sec:setup},
which also simulates the dense case of the model. The dense case is still difficult to analyze, and thus in Section~\ref{sec:rdae-dynamics} we further simplify it to case of diagonal covariance matrices and weight initialization, and present the collapse mechanism in detail. Section~\ref{sec:theorem} explains how to predict effective dimension of the resulting VQ-VAE from the state of the AE Warm-up checkpoint used.
Subsequent Section~\ref{sec:experiments} validates that the behavior of non-linear VQ-VAEs trained on real data is in qualitative agreement with the behavior of our toy model.

\subsection{Setup}
\label{sec:setup}

\input{figures/rdae.tex}

\textbf{Architecture.} \ \ The data is $x \sim \calN(0, \Sigma)$ on $\R^d$ with $\Sigma =
\diag(\sigma_1^2, \ldots, \sigma_d^2)$ and $\sigma_1^2 \ge \cdots \ge \sigma_d^2 > 0$. A linear
encoder $W_1 \in \R^{d\times d}$ produces the pre-quantization latent $z = W_1 x$; a (potentially
stochastic) quantizer $Q$ produces $z_q := Q(z)$; a linear decoder $W_2 \in \R^{d\times d}$
returns $\hat x = W_2 z_q$ (Figure~\ref{fig:rd-ae}). Training objective is the standard VQ-VAE
loss~\eqref{eq:vqvae-loss}.

The learning dynamics follow standard VQ-VAE conventions: the encoder reconstruction ``gradient''
uses the straight-through estimator $\partial z_q/\partial z := I$, and the commitment term treats
$z_q$ as constant via stop-gradient. As a result, the update equations for $W_1$ and $W_2$ are not
the gradient of \eqref{eq:vqvae-loss}.

\textbf{Information-theoretic relaxation.} Analyzing VQ's effect on the training dynamics requires jointly tracking $K$ high-dimensional codewords alongside the encoder and decoder. The codebook also breaks the diagonal structure exploited by~\cite{saxe2014exact}, preventing a per-mode decomposition.
Our key insight is that we can replace the dynamically evolving VQ with its
information-theoretic optimal counterpart. Specifically, we assume that the map $z\mapsto z_q$ at
every instant of training is the one that minimizes MSE subject to the information bottleneck constraint:
\begin{equation}\label{eq:rdae-def}
	  \min_{P_{z_q|z}} \left\{\E[\|z-z_q\|^2]: I(z;z_q) \le \log_2 K\right\}\, \triangleq D(R),
\end{equation}
where we denote the minimum MSE as the optimal rate-$R$ distortion in \emph{latent space}, $D(R)$.
This can be understood as relaxing the hard constraint $|\mathrm{supp} z_q| \le K$ to a
softer mutual information constraint. Although we optimistically assume that quantizer is trained
fast enough to remain optimal for the evolving statistics of $z$, RD-AE nevertheless exhibits dimensional collapse. Due to the Gaussianity of $z$, the solution of ~\eqref{eq:rdae-def} is given by a classical waterfilling formula~\cite[Ex.
I.9]{polyanskiy2025information} and conveniently diagonalizes when the initialization and $\Sigma$
does. Thus we obtain a similarly simple and insightful dynamics as in~\cite{saxe2014exact} except
now including the VQ part.

\textbf{RD-AE flow (dense case).} \ \ Training is governed by the system of ODEs on $W_1, W_2$ (derived in
\cref{app:dense-flow}):
\begin{align}
  \dot W_1 &= -2 \big(W_2^\top W_2 \, M \, W_1 - W_2^\top\big) \Sigma - 2 \beta \big(W_1 - M W_1\big) \Sigma, \label{eq:dense-W1} \\
  \dot W_2 &= 2 \Sigma W_1^\top M - 2 W_2 \, U \diag(c_j \lambda_j) U^\top\,, \label{eq:dense-W2}
\end{align}
where $M$ is a covariance of the noise in the  $z\to z_q$ channel, and $c_j$ and $\lambda_j$ are
the spectra of $M$ and $\mathrm{Cov}(z)$, respectively (see \cref{app:dense-flow}). Importantly, when $K \to
\infty$ %
the channel matrix $M \to I$, and the commitment term vanishes. In this limit the RD-AE
flow~\eqref{eq:dense-W1}--\eqref{eq:dense-W2} reduces to the \emph{gradient flow} of the AE
objective in \cref{sec:ae-gradient-flow}, though the $K$ required for this to happen is
impractically large.

\textbf{Numerical simulation (dense case).} \ \
Before deriving the dynamics formally, we preview the qualitative behavior.
Figure~\ref{fig:theory-top} simulates the \textit{RD-AE flow} with $d = 64$, $\sigma_j^2 =
j^{-1}$, $\beta = 1$, with i.i.d.\ Gaussian initialization, compared with the plain-AE limit. The
plain AE drives $L_\mathrm{rec} \to 0$ while $d_{\textrm{eff}}$ dips and recovers. The RD-AE, by
contrast, exhibits dimensional collapse at every rate $R \in \{10, 14, 18\}$ bits, to
$d_{\textrm{eff}} = 3, 4, 5$ respectively, mirroring the behavior we see in practice. The reconstruction loss plateaus strictly above the Shannon distortion floor $D(R)$.\footnote{Although $D(R)$ is the minimum \emph{latent}-space MSE, it also lower-bounds the data-space MSE: by the data processing inequality, $I(x;\hat{x}) \le I(z;z_q) \le R$, so $\hat{x}$ is a rate-$R$ description of $x$ and must satisfy $\E\|x-\hat{x}\|^2 \ge D(R)$.}
This gap is the cost of dimensionality collapse.

\begin{figure}[t]
  \centering
  \includegraphics[width=0.8\linewidth]{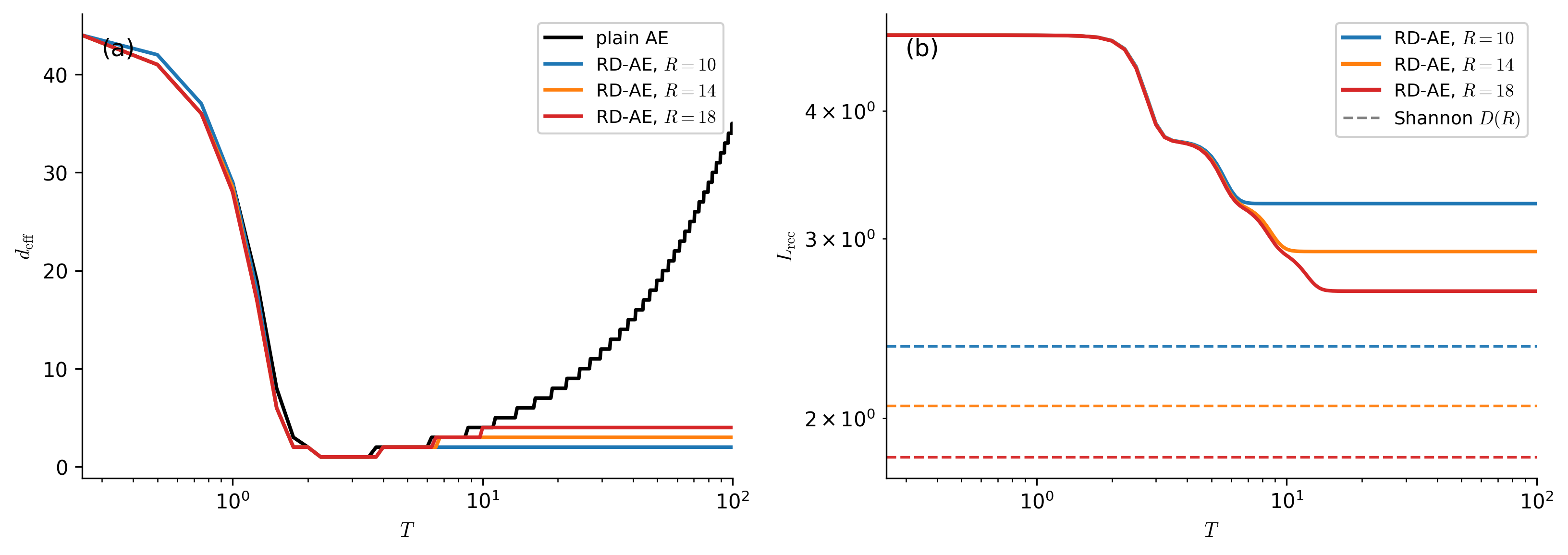}
  \caption{\textbf{Plain AE vs.\ RD-AE flow.} $d = 64$, $\Sigma = \diag (j^{-1})$, $\beta = 1$. $W_1, W_2$ are $d \times d$ matrices with i.i.d.\ entries $(W_i)_{jk} \sim \calN(0, s^2/(4d))$, $s = 0.01$, evolved under the RD-AE flow~\eqref{eq:dense-W1}--\eqref{eq:dense-W2}. Each curve is the median over 128 seeds; cross-seed variance is below $10^{-3}$.
    \textbf{(Left)} Effective dimension $d_\mathrm{eff}$. Plain AE (black) dips and recovers; RD-AE (colored by rate $R$) dips and plateaus.
    \textbf{(Right)} Reconstruction loss. Dashed lines are Shannon floors $D(R)$, achieved by optimal enc-dec pair.}
  \label{fig:theory-top}
\end{figure}

\subsection{Analysis of the RD-AE dynamics in the diagonal case}
\label{sec:rdae-dynamics}

To get more insight into the dynamics, we restrict attention to diagonal $W_1, W_2$: 
$$ W_1(t) =: \diag(u_j(t)), \quad W_2(t) =: \diag(v_j(t)), \quad \Sigma =: \diag(\lambda_j)\,.$$
It is clear that in this case equations~\eqref{eq:dense-W1}-\eqref{eq:dense-W2} decouple and
diagonality at initialization is preserved throughout training. Next, we derive equations for
$u_j,v_j$ and characterize the dimensional collapse mechanism via: (1) a
per-mode flow that reduces to a~\cite{saxe2014exact} logistic with a channel-dependent timescale, (2) a dynamical
water level whose rise pushes active modes below threshold and freezes them, and (3) a closed-form
expression for the resulting loss plateau.
The restriction to the diagonal case is only done for analytic convenience; this simplification still expresses all the major phenomena in the
dense dynamics of
$W_1, W_2$ (cf. \eqref{eq:dense-W1}-\eqref{eq:dense-W2}) that is simulated \cref{fig:per-mode-dynamics} and
\cref{fig:theory-top}.

\textbf{The rate-distortion channel.} \ \
Since $W_1$ is diagonal, $z_j = u_j x_j$ has independent coordinates $z_j \sim \calN(0, \lambda_j)$ with $\lambda_j = u_j^2 \sigma_j^2$. The rate-$R$ channel $P_{z_q \mid z}$ minimizing $\E\|z - z_q\|^2$ under the information constraint factorizes across coordinates~\citep[Sec.~20.4, Ch.~26]{polyanskiy2025information} and is given by reverse water-filling: a unique \emph{water level} $D^\star = D^\star(\lambda_1, \ldots, \lambda_d; R)$, representing the variance threshold below which modes are dropped from the channel, solves
\begin{equation}
  \textstyle\sum_j \tfrac{1}{2} \log_2^+(\lambda_j/D^\star) \;=\; R,
  \label{eq:water-fill}
\end{equation}
and the channel acts coordinatewise as $z_{q,j} \mid z_j \sim \calN(c_j z_j, \tau_j^2)$ with per-mode distortion $D_j$, channel gain $c_j$, and noise variance $\tau_j^2$ given by
\begin{equation}
  D_j \;=\; \min(\lambda_j, D^\star), \qquad c_j \;=\; 1 - D_j/\lambda_j, \qquad \tau_j^2 \;=\; c_j D^\star.
  \label{eq:rd-channel-params}
\end{equation}
Coordinates with $\lambda_j \le D^\star$ satisfy $c_j = \tau_j^2 = 0$ and are passed through as zero; we call these \emph{inactive} and the rest \emph{active}, and write the active set $\mathcal{A}(t) = \{j : \lambda_j(t) > D^\star(t)\}$ with $k(t) = |\mathcal{A}(t)|$.
The channel decomposes as $z_j = z_{q,j} + n_j$ jointly, with $n_j \sim \calN(0, D_j)$ independent of $z_{q,j}$, so that the total latent distortion $\E\|z - z_q\|^2 = \sum_j D_j$ equals $D(R)$ from~\eqref{eq:rdae-def}.

\textbf{Diagonal RD-AE flow.} \ \
Differentiating the full training objective~\eqref{eq:vqvae-loss} requires care. We begin with the reconstruction loss $L_\rec = \E\|x - W_2 z_q\|^2$, which decomposes per mode using the joint decomposition $z = z_q + n$, coupled across modes through $D^\star$:
\begin{equation}
  L_\rec(u, v) \;=\; \sum_{j=1}^d \Bigl[\, \sigma_j^2(1 - c_j u_j v_j)^2 \;+\; v_j^2 \tau_j^2 \,\Bigr],
  \label{eq:rdae-rec-diag}
\end{equation}
Under these conditions, the decoder gradient is directly computed from ~\eqref{eq:rdae-rec-diag}. For the encoder, we apply the chain rule with the STE and SG conventions ($\partial z_q / \partial z =: I$ and $\partial \sg(x) / \partial x := 0$) per-sample and take expectation. We get (full derivation in \cref{app:rdae-gradients}): 
\begin{equation}
  \frac{\partial L_\rec}{\partial v_j} \;=\; -2 c_j u_j \sigma_j^2 (1 - u_j v_j), \qquad \left.\frac{\partial L_\rec}{\partial u_j}\right|_\mathrm{STE} \;=\; -2 \sigma_j^2 v_j (1 - c_j u_j v_j)
  \label{eq:rdae-grad}
\end{equation}
The encoder commitment gradient gives $\partial L_\com/\partial u_j = 2 D_j/u_j$. Assembling, we get the diagonal RD-AE flow
\begin{equation}
  \dot u_j \;=\; 2\sigma_j^2 v_j (1 - c_j u_j v_j) \;-\; \frac{2 \beta D_j}{u_j},
  \qquad 
  \dot v_j \;=\; 2\sigma_j^2 c_j u_j (1 - u_j v_j).
  \label{eq:rdae-flow}
\end{equation}
We note that due to the straight-through estimator, %
the flow~\eqref{eq:rdae-flow} is  \textit{not a gradient} flow, since the symmetry condition $\partial \dot u_j / \partial v_j = \partial \dot v_j / \partial u_j$ fails whenever $c_j \neq 1$.

\paragraph{Mode dynamics at balanced initialization.}
We specialize to $\beta = 1$ and balanced initialization $u_j(0) = v_j(0) = \sqrt{s}$. Balance is preserved throughout training: for $j \in \mathcal{A}$, $D_j = D^\star = \lambda_j(1-c_j) = u_j^2 \sigma_j^2 (1-c_j)$, and at $u_j = v_j =: a_j$, we have $\dot u_j - \dot v_j  = 0$.
 
Hence $u_j(t) = v_j(t)$ for all $t$ where mode $j$ is active. Then, $r_j := u_j v_j = u_j^2$ obeys
\begin{equation}
  \dot r_j \;=\; 4\sigma_j^2\, c_j(t)\, r_j(1 - r_j) \quad \text{for } j \in \mathcal{A}, 
  \qquad 
  \dot r_j \;=\; 0 \quad \text{for } j \notin \mathcal{A},
  \label{eq:rdae-logistic}
\end{equation}
derived in Appendix~\ref{app:beta1-reduction}. This is structurally similar to logistic~\eqref{eq:ae-sigmoid} with $c_j(t)$ acting as a time-dependent scale factor: at $c_j(t) \equiv 1$ (the $K \to \infty$ limit), \eqref{eq:rdae-logistic} reduces to \eqref{eq:ae-flow}. For $j \notin \mathcal{A}$, $c_j = 0$ and $D_j = \lambda_j$ give $\dot u_j = \dot v_j = 0$ on the balanced manifold.

\textbf{Collapse mechanism.} \ \ Implicit differentiation of~\eqref{eq:water-fill} yields the water level dynamics:
\begin{equation}
  \frac{\dot D^\star}{D^\star} \;=\; \frac{2}{k} \sum_{i \in \mathcal{A}} \frac{\dot u_i}{u_i},
  \label{eq:water-level-dynamics}
\end{equation}
derived in Appendix~\ref{app:water-level-diagonal}.
As active modes grow, $D^\star$ rises through~\eqref{eq:water-level-dynamics}, and modes whose $\lambda_j$ falls below the water line drop out of $\mathcal{A}$ and freeze permanently. The active set is therefore non-increasing,
unlike in the plain AE, where every mode eventually activates. \emph{This is the dimensional collapse phenomenon}: weak modes that have not yet activated when $D^\star$ overtakes them will remain unrepresented in the latent space forever.

\textbf{Loss plateau.} \ \ 
At convergence, with $k_\infty = |\mathcal{A}_\infty|$ active modes surviving, the reconstruction loss is
\begin{equation}
  L_\rec^\infty \;=\; k_\infty D^\star_\infty \;+\; \sum_{j \notin \mathcal{A}_\infty} \sigma_j^2,
  \label{eq:rdae-loss-infty}
\end{equation}
where the rate budget $R$ is spent only on $\mathcal{A}_\infty$, %
leaving frozen modes unrepresented (paying their full variance $\sigma_j^2$). The plateau therefore exceeds the Shannon optimum $D(R)$ whenever $\mathcal{A}_\infty$ is a strict subset of the optimal active set (Figure~\ref{fig:theory-top}).

\textbf{Special case of $\beta=0$.}  Removing the commitment term by setting $\beta=0$ leads
to very different dynamics: \eqref{eq:rdae-flow} gives $\dot u_j = 2\sigma_j^2 v_j > 0$ on 
inactive modes, so RD-AE always eventually recovers from dimensional collapse at $\beta = 0$. This does not, however, justify dropping the commitment loss in practice: in VQGAN, we find that $\beta=0$ allows dimension recovery when
trained with dead code reset, but achieves worse final loss (Appendix~\ref{app:vqgan_beta0}). We attribute this discrepancy to two aspects that
RD-AE elides: the finite codebook adaptation speed (causing staleness between
codewords and statistics of $z$) and weight-decay that enables continued evolution
(shrinkage) of inactive modes.

\subsection{Effect of AE Warm-up}
\label{sec:theorem}

Section~\ref{sec:rdae-dynamics} showed that dimensional collapse happens because the rate constraint imposes a rising water level that freezes modes falling below it. A natural fix is \emph{AE Warm-up}: delay quantization until enough modes have activated. This runs the plain-AE flow~\eqref{eq:ae-flow} from balanced diagonal initialization $W_1(0) = W_2(0) = \varepsilon I$ for warm-up duration $T_\mathrm{wu}$, then switch to the RD-AE flow~\eqref{eq:rdae-flow} with rate $R$ and $\beta = 1$. The following theorem bounds how many modes survive the switch as a function of $T_\mathrm{wu}$.

\begin{theorem}
\label{thm:warmup}
Let $x \sim \mathcal{N}(0, \Sigma)$ with $\Sigma = \diag(\sigma_1^2, \ldots, \sigma_d^2)$ and $\sigma_1^2 > \cdots > \sigma_d^2 > 0$. Let $g_j(T) = \big(1 + \tfrac{1-\varepsilon^2}{\varepsilon^2} e^{-4\sigma_j^2 T}\big)^{-1}$ denote the activation of mode $j$ at time $T$ in plain-AE training \eqref{eq:ae-sigmoid}. Define
\begin{equation}
  m^\mathrm{wu}(T_\mathrm{wu}, R)
  \;:=\; \max\!\Bigl\{m \in \{1, \ldots, d\} \;:\; R > \tfrac{1}{2} \sum_{i=1}^{m-1} \log_2 \tfrac{\sigma_i^2\, g_i(T_\mathrm{wu})}{\sigma_m^2\, g_m(T_\mathrm{wu})}\Bigr\}.
  \label{eq:m-wu}
\end{equation}
After switching to RD-AE flow at $T_\mathrm{wu}$, at most $m^\mathrm{wu}$ modes remain active at convergence, and
\begin{equation}
  L_\mathrm{rec}^\infty
  \;\geq\; m^\mathrm{wu}\, \delta_{m^\mathrm{wu}}(R) \;+\; \sum_{j > m^\mathrm{wu}} \sigma_j^2,
  \qquad
  \delta_m(R) := \Bigl(\prod_{i=1}^{m} \sigma_i^2\Bigr)^{\!1/m} 2^{-2R/m}.
  \label{eq:loss-bound}
\end{equation}
\end{theorem}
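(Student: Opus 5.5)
The plan has three steps: read off the active set at the switching time, propagate it forward using the monotonicity of the active set, and finish with a convexity bound on the plateau loss.

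\emph{Step 1: the active set at the switch.} At $T_\mathrm{wu}$ the plain-AE flow \eqref{eq:ae-flow} has brought mode $j$ to $u_j=v_j$ with $r_j=u_j^2=g_j(T_\mathrm{wu})$, by \eqref{eq:ae-sigmoid} with $s=\varepsilon^2$. The latent variances are therefore $\lambda_j=\sigma_j^2 g_j(T_\mathrm{wu})$. This sequence is strictly decreasing in $j$: both $\sigma_j^2$ and $g_j$ decrease, since $g_j$ is increasing in $\sigma_j^2$ for fixed $T$.

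Reverse water-filling \eqref{eq:water-fill} with sorted variances activates a prefix $\{1,\dots,k\}$. Mode $m$ is active iff $\lambda_m>D^\star$. The left side of \eqref{eq:water-fill}, viewed as a function of the level $D$, is strictly decreasing wherever it is positive. So $\lambda_m>D^\star$ iff that function evaluated at $D=\lambda_m$ is less than $R$, that is, iff
\[
\tfrac12\sum_{i<m}\log_2(\lambda_i/\lambda_m)<R .
\]
Hence the initial active set is exactly $\{1,\dots,m^\mathrm{wu}\}$ with $m^\mathrm{wu}$ as in \eqref{eq:m-wu}. The condition is monotone in $m$, because the sum increases with $m$, so the max is attained at the boundary of a prefix.

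\emph{Step 2: propagation to convergence.} Here I would invoke the collapse mechanism of \cref{sec:rdae-dynamics}. Balance is preserved, inactive modes are frozen ($\dot r_j=0$ by \eqref{eq:rdae-logistic}), and active modes have $\dot r_j\ge 0$ because $c_j\ge0$ and $r_j\le1$. Then \eqref{eq:water-level-dynamics} gives $\dot D^\star\ge0$, so a frozen mode with $\lambda_j\le D^\star$ can never re-enter.

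The step I expect to be the main obstacle is making this rigorous. One must rule out that, as active modes drop out, the water level falls enough to readmit a frozen mode. Dropping a mode whose $\lambda$ equals $D^\star$ does not change the solution of \eqref{eq:water-fill}, so $D^\star$ is continuous at such events. Between events it is nondecreasing by \eqref{eq:water-level-dynamics}. I would combine these two facts into a short argument by contradiction at the first hypothetical re-entry time. The conclusion is that $\mathcal{A}(t)$ is non-increasing, and so $k_\infty\le m^\mathrm{wu}$.

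\emph{Step 3: the loss bound.} Start from \eqref{eq:rdae-loss-infty}, and write $k=k_\infty$ and $\mathcal{A}_\infty=\{1,\dots,k\}$. At convergence active modes satisfy $r_j\to1$, so $\lambda_j=\sigma_j^2$. The water level then solves $\frac12\sum_{i\le k}\log_2(\sigma_i^2/D^\star)=R$, giving $D^\star_\infty=\delta_k(R)$. The loss is therefore
\[
F(k):=k\,\delta_k(R)+\sum_{j>k}\sigma_j^2 .
\]
It remains to show $F(k)\ge F(m^\mathrm{wu})$ for $k\le m^\mathrm{wu}$, i.e.\ that $F$ is nonincreasing in $k$ on this range. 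This is the standard fact that allocating rate $R$ to $k$ components with equal distortion is suboptimal unless extra components are added while $\sigma_{k+1}^2$ exceeds the current level. Concretely, $F(k)$ is the minimum of $\sum_{i\le k}D_i+\sum_{j>k}\sigma_j^2$ over $D_i$ with $\sum_{i\le k}\frac12\log_2(\sigma_i^2/D_i)=R$; I would allow $D_i>\sigma_i^2$ here, and the equal-$D$ solution is the minimizer by AM–GM.

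For $F(k+1)\le F(k)$ I take a feasible competitor for the $(k+1)$-problem. Starting from the $k$-optimal equal allocation at level $\delta_k$, I move an infinitesimal amount of rate to coordinate $k+1$ at $D_{k+1}=\sigma_{k+1}^2$. The marginal gain is favorable exactly when $\sigma_{k+1}^2>\delta_k(R)$. That inequality is equivalent to $R>\frac12\sum_{i\le k}\log_2(\sigma_i^2/\sigma_{k+1}^2)$, which is precisely the defining condition in \eqref{eq:m-wu} with $\sigma_i^2$ in place of $\sigma_i^2 g_i$.

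Here is a subtle point I would need to check: $m^\mathrm{wu}$ is defined through the warm-up variances $\sigma_i^2 g_i$, not through $\sigma_i^2$. If the inequality only goes one way, I would instead bound $F(k)$ below by its minimum over all $k\le m^\mathrm{wu}$, and then argue via convexity of the water-filling value in $k$ that this minimum is attained at $k=m^\mathrm{wu}$. As a fallback, I would weaken the statement to that minimum.
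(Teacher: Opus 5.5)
Your three steps are the paper's proof. You read off the active prefix at the switch by water-filling on $\sigma_j^2 g_j(T_\mathrm{wu})$. You use a nondecreasing water level that keeps frozen modes out, so $k_\infty\le m^\mathrm{wu}$. You then compare the plateau $k_\infty\delta_{k_\infty}(R)+\sum_{j>k_\infty}\sigma_j^2$ against water-filling over the top $m^\mathrm{wu}$ modes. The ``subtle point'' you leave open at the end of Step 3 closes in your favor, so neither the fallback nor any weakening of the statement is needed.

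\emph{Monotonicity of $F$ needs no condition.} Your competitor is already feasible before any rate is moved. The allocation $(\delta_k(R),\dots,\delta_k(R),\sigma_{k+1}^2)$ puts zero rate on coordinate $k+1$, satisfies the $(k+1)$-constraint, and has value exactly $F(k)$. Hence $F(k+1)\le F(k)$ for every $k$. Equivalently, by weighted AM--GM, $(k+1)\delta_{k+1}(R)=(k+1)\bigl(\delta_k(R)^k\sigma_{k+1}^2\bigr)^{1/(k+1)}\le k\,\delta_k(R)+\sigma_{k+1}^2$. The condition $\sigma_{k+1}^2>\delta_k(R)$ only decides whether the decrease is strict.

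\emph{The condition holds anyway.} It is true for all $k+1\le m^\mathrm{wu}$, using a fact you already stated in Step 1: $g_j$ is increasing in $\sigma_j^2$. So $g_i(T_\mathrm{wu})\ge g_m(T_\mathrm{wu})$ for $i<m$. The sum in \eqref{eq:m-wu} therefore dominates $\tfrac12\sum_{i<m}\log_2(\sigma_i^2/\sigma_m^2)$, which is itself nondecreasing in $m$. The paper relies on this second fact without comment when it says the Shannon-optimal scheme on the top $m^\mathrm{wu}$ modes activates all of them.

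\emph{What your relaxed AM--GM route buys.} It does not need that fact. It also does not need the prefix form $\mathcal{A}_\infty=\{1,\dots,k\}$, which you assume without argument (the paper derives it from preservation of the ordering of the $\lambda_j$). For any final active set $S\subseteq\{1,\dots,m^\mathrm{wu}\}$, padding with $D_j=\sigma_j^2$ gives the same comparison.

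\emph{Step 2 is lighter than you expect.} The left side of \eqref{eq:water-fill} is nondecreasing in each $\lambda_j$ and strictly decreasing in $D$ where positive. So $D^\star$ is a continuous, coordinatewise nondecreasing function of $(\lambda_1,\dots,\lambda_d)$. Every $\lambda_j$ is nondecreasing in time, whether the mode is frozen or growing, so $D^\star$ can never fall. A frozen mode therefore stays below it, and no separate bookkeeping at drop-out events is required.
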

\textbf{Proof sketch.} \ \
At the switch time, $\lambda_j(T_\mathrm{wu}) = g_j(T_\mathrm{wu}) \sigma_j^2$, and the condition in~\eqref{eq:m-wu} is the water-filling requirement for the top $m^\mathrm{wu}$ modes to lie above $D^\star(T_\mathrm{wu})$. Modes inactive at $T_\mathrm{wu}$ stay at $r_j = g_j(T_\mathrm{wu})$, contributing variance $\sigma_j^2$ to the reconstruction loss. The bound on $L_\mathrm{rec}^\infty$ follows from optimal water-filling of rate $R$ over the top $m^\mathrm{wu}$ modes; full proof in Appendix~\ref{app:warmup}.

\textbf{Empirical validation of the warm-up theory.} \ \
Although Theorem~\ref{thm:warmup} is stated for the diagonal RD-AE model (where it is nearly tight; see Appendix~\ref{app:theorem-synthetic}), its core construction transfers directly to non-linear VQ-VAEs. Indeed, one can snapshot a state of AE during the
warm-up stage, compute PCA of the covariance matrix of the latents, evaluate water-filling rate
allocation, and compute the number of surviving dimensions. The resulting number turns out to give
a rather accurate upper bound on the final latent dimension of the trained VQ-VAE if the warm-up is
switched off at the time of the snapshot: see Figure~\ref{fig:vqgan_cb16k_3panel} (right) and
Appendix~\ref{app:vqgan_experiments}). This allows to make on-the-fly automatic decision about
when to switch from the AE (warm-up) phase and start training the actual VQ-VAE.

%% file: figures/rdae.tex
\begin{figure}[t]
\centering
\begin{tikzpicture}[
    node distance=5mm and 7mm,
    box/.style={draw, rounded corners=2pt, minimum height=7mm, inner sep=4pt},
    every node/.style={font=\small},
    >=stealth
]
    \node (x)    {$\xvec$};
    \node[box, right=of x] (W1) {$\Wone \xvec$};
    \node[right=of W1] (z) {$\zvec$};
    \node[box, right=of z] (Q)
        {$Q(\zvec) = \arg \min_{I(Q(z), z)\leq R} \mathbb{E}\|Q(z) - \zvec\|^2$};
    \node[right=of Q] (zq) {$\zqvec$};
    \node[box, right=of zq] (W2) {$\Wtwo \zqvec$};
    \node[right=of W2] (xh) {$\xhat$};

    \draw[->] (x)  -- (W1);
    \draw[->] (W1) -- (z);
    \draw[->] (z)  -- (Q);
    \draw[->] (Q)  -- (zq);
    \draw[->] (zq) -- (W2);
    \draw[->] (W2) -- (xh);

    \draw[->, thick, dashed, blue!55!black]
        (zq.north) -- ++(0,8mm) -| (z.north);
    \node[font=\footnotesize, blue!55!black, below] 
        at ($(z.north)!0.5!(zq.north) + (0,8mm)$)
        {STE backward: $\partial \zqvec / \partial \zvec := I$};
\end{tikzpicture}
\caption{\textbf{The linear RD-AE.} The linear encoder $\Wone$ maps data to the pre-quantization latent $\zvec$; the (potentially stochastic) quantizer $Q$ assigns $\zvec$ to code $\zqvec$; the linear decoder $\Wtwo$ reconstructs $\xhat$. The backward pass through the quantizer uses the straight-through estimator.}
\label{fig:rd-ae}
\end{figure}

%% file: sections/07_experiments.tex
\begin{figure}[t]
    \centering
    \includegraphics[width=1.0\linewidth]{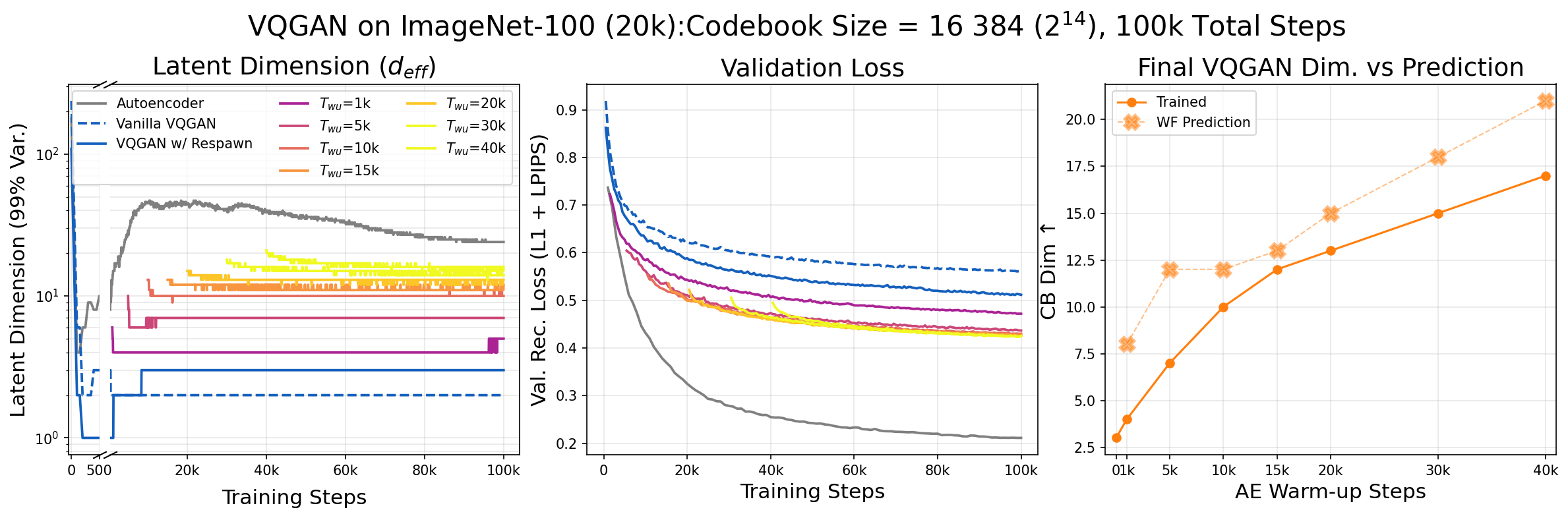}
    \caption{\textbf{VQGAN on ImageNet-100 (20k), $|\mathcal{C}| = 2^{14}$}. Codebook $d_\text{eff}$ (left) and validation reconstruction loss (center) during training for AE warm-up durations $T_\text{wu} \in \{0, 1\text{k}, \dots, 40\text{k}\}$; longer warm-up raises codebook dimension and lowers the loss floor, with returns diminishing past $T_\text{wu} \approx 20\text{k}$. \textit{Right:} reverse water-filling on the AE checkpoint's PCA spectrum at rate $R = 14$ is an empirical upper bound on the final trained codebook $d_\text{eff}$ (circles). See Appendix~\ref{app:vqgan_experiments} for $|\mathcal{C}| \in \{2^{10}, 2^{16}\}$.
    }
    \label{fig:vqgan_cb16k_3panel}
\end{figure}

\section{Escaping Dimensional Collapse with Autoencoder Warm-up}
\label{sec:experiments}

\cref{sec:rdae} argues that VQ-VAE reconstruction is bottlenecked by the latent effective dimension, not the codebook size: modes inactive when the quantizer turns on are frozen by the rate constraint and never recover. The suggested fix, \emph{AE Warm-up}, trains the encoder--decoder as a plain autoencoder for $T_\text{wu}$ steps so that lower-variance directions activate first, and then introduces VQ. A single warm-up checkpoint can be reused across codebook sizes, amortizing its cost over a sweep.

\textbf{Choosing $T_\text{wu}$.} \ \
The theory tells us that reverse water-filling at rate $R = \log_2 K$ on the PCA spectrum of the encoder's latents upper-bounds the codebook $d_\text{eff}$ that the downstream VQ-VAE will achieve (\cref{fig:vqgan_cb16k_3panel}, right). In practice, $T_\text{wu}$ should be chosen by monitoring $d_\text{eff}$ during AE training and stopping once it plateaus — further AE training does not lift additional modes above the water level at the target rate. For VQGAN on ImageNet-100 this occurs around 20{,}000 steps (\cref{fig:ae_activation_vqgan}).

\textbf{Empirical validation.} \ \
We test the prediction at scale on both image and audio modalities, training VQGAN on ImageNet-100 (\cref{sec:vqgan_validation}) and WavTokenizer on LibriTTS (\cref{sec:wavtokenizer_validation}). Each setting addresses three questions: \textbf{(Q1)} does cold-start VQ training collapse the latent to a low-dimensional subspace in large-scale architectures; \textbf{(Q2)} does the resulting loss floor persist as $K$ grows; and \textbf{(Q3)} does AE warm-up recover latent dimensionality and restore the expected scaling of reconstruction quality with $K$.

\subsection{VQGAN on ImageNet-100}
\label{sec:vqgan_validation}
\input{tables/vqgan_100k.tex}

\paragraph{Setup.} We train VQGAN~\citep{esser2021vqgan} on 20{,}000 images subsampled from ImageNet-100 at $128 \times 128$ resolution, with L1 and LPIPS losses for 100k steps (the discriminator loss in~\citet{esser2021vqgan} only activates after 250k steps). We compare three methods across codebook sizes $\nK \in \{2^{10}, 2^{14}, 2^{16}\}$: \textbf{Vanilla VQGAN} (random codebook init, straight-through updates); \textbf{VQGAN w/ Respawn}, the standard modern VQ-VAE training method with k-means init, EMA codebook updates (decay 0.99), and dead-code respawn at usage threshold $0.01$~\citep{takida2022sqvae, mentzer2024fsq, dhariwal2020jukebox}; and \textbf{AE Warm-up}, which trains as a plain autoencoder for $T_\text{wu} \in \{1\text{k}, 5\text{k}, 10\text{k}, 15\text{k}, 20\text{k}, 30\text{k}, 40\text{k}\}$ steps before resuming the Respawn protocol. All variants share a 100k-step budget. Full training and evaluation details can be found in \cref{app:vqgan_experiment_detail}.

\paragraph{Key findings.} \ \ Table~\ref{tab:vqgan_ablation} shows the final metrics. Without warm-up, neither Vanilla VQGAN nor the Respawn-only baseline improves substantially as $K$ grows. AE Warm-up restores scaling with codebook size and improves all metrics over both baselines: 8.1--18.6\% over Vanilla VQGAN and 5.3--11.0\% over Respawn on L1, 18.0--31.4\% and 14.5--23.1\% on LPIPS, and 30.6--47.5\% and 17.9--35.4\% on rFID. Returns plateau beyond $T_\text{wu}=20\text{k}$.

\textbf{Vanilla VQGAN does not benefit from a larger codebook.} \ \ The three vanilla rows sit at the same L1 loss (0.153) and the same codebook dimension ($d_\text{eff} = 2$) despite $K$ growing by $64\times$, with utilization collapsing from 82.7\% at $K=2^{10}$ to 2.2\% at $K=2^{16}$.

\textbf{Respawn partially rescues the floor but cannot recover zeroed-out modes.} \ \ VQGAN w/ Respawn improves on Vanilla at every $K$ but falls short of AE warm-up in all cases. Its codebook dimension reaches only 3 at $K \in \{2^{10}, 2^{14}\}$ and 5 at $K = 2^{16}$ — far below warm-up values — consistent with respawn redistributing dead codes within the active subspace rather than reviving collapsed modes.

\textbf{Warm-up lowers the floor.} \ \ Increasing $T_\text{wu}$ from 1k to 40k steps monotonically raises codebook effective dimension (from 4--5 to 17--19) and improves LPIPS (from 0.31--0.36 to 0.28--0.33), with noisier but consistent gains on rFID. As predicted by Theorem~\ref{thm:warmup}, each additional warm-up step allows more modes to be represented by the encoder before the quantizer is introduced.

\subsection{WavTokenizer on LibriTTS}
\label{sec:wavtokenizer_validation}
\input{tables/wavtokenizer_100ep.tex}
\paragraph{Setup.} 

To validate our theory in the audio domain and on a larger-scale setting, we train WavTokenizer~\citep{ji2025wavtokenizer} on LibriTTS~\citep{zen2019libritts} with full reconstruction and adversarial losses. We use four codebook sizes ($\nK \in \{4096, 8192, 16384, 65536\}$) and a fixed training budget of 100 epochs in two regimes: (i) cold-start WavTokenizer training (the baseline) and (ii) AE Warm-up for 43 epochs before initializing the codebook with k-means and training as a VQ-VAE with dead code respawn. All other hyperparameters match the original WavTokenizer; details are in \cref{app:wavtokenizer-experiments}.

\paragraph{Key findings.}
Table~\ref{tab:wavtok_comparison} confirms the same pattern as VQGAN. Cold-start training pins codebook $d_\text{eff}$ at $4$ regardless of $K$---fewer than $1\%$ of the 512 available directions---and utilization collapses from $100\%$ at $K{=}2^{12}$ to $8.4\%$ at $K{=}2^{14}$. Performance does not merely plateau but \emph{degrades} with $K$: test-clean PESQ drops from $2.34$ at $K{=}2^{12}$ to $2.18$ at $K{=}2^{13}$. The exception is $K{=}2^{12}$, where a 4-dimensional latent already tiles the codebook and warm-up's extra dimensions ($d_\text{eff}{=}11$) are surplus capacity---consistent with the theory.

AE Warm-up restores monotone scaling: codebook $d_\text{eff}$ rises from $11$ to $21$ as $K$ grows from $2^{12}$ to $2^{16}$, all at near-full utilization. The reconstruction metrics also improve: Mel loss drops from $0.469$ to $0.428$ on test-clean, and PESQ improves from $2.275$ to $2.626$. We did not attempt cold-start at $K{=}2^{16}$ given the severity of collapse already at $K{=}2^{14}$.

%% file: tables/vqgan_100k.tex
\begin{table}[!t]
\centering
\scriptsize
\setlength{\tabcolsep}{3pt}
\caption{\textbf{VQGAN on ImageNet-100 (20k) warm-up ablation.} L1, LPIPS, and rFID: lower is better. D: codebook $d_\text{eff}$ (higher is better). Bold indicates the best value within each codebook size. Vanilla VQGAN uses uniform codebook init and no respawn; VQGAN w/ Respawn adds k-means init and respawn; $T_\text{wu} = $ \{1k, \ldots, 40k\} additionally pre-trains the encoder--decoder as a plain autoencoder for the indicated number of steps before introducing the codebook. All runs share the same 100k-step budget, so warm-up trades VQ steps for AE steps. %
}
\vspace{1em}
\begin{tabular}{l *{15}{r}}
\toprule
& \multicolumn{5}{c}{$|\mathcal{C}| = 2^{10}$}
& \multicolumn{5}{c}{$|\mathcal{C}| = 2^{14}$}
& \multicolumn{5}{c}{$|\mathcal{C}| = 2^{16}$} \\
\cmidrule(lr){2-6} \cmidrule(lr){7-11} \cmidrule(lr){12-16}
Variant
& Util\% & $D$$\uparrow$ & L1$\downarrow$ & LPIPS$\downarrow$ & rFID$\downarrow$
& Util\% & $D$$\uparrow$ & L1$\downarrow$ & LPIPS$\downarrow$ & rFID$\downarrow$
& Util\% & $D$$\uparrow$ & L1$\downarrow$ & LPIPS$\downarrow$ & rFID$\downarrow$ \\
\midrule
Vanilla VQGAN
& 82.7  & 2 & 0.153 & 0.405 & 87.8
& 8.0   & 2 & 0.153 & 0.408 & 81.0
& 2.2   & 2 & 0.153 & 0.406 & 83.6 \\
\qquad w/ Respawn
& 100.0 & 3 & 0.148 & 0.389 & 74.2
& 99.9  & 3 & 0.140 & 0.372 & 69.7
& 47.0  & 5 & 0.140 & 0.363 & 67.9 \\
$T_\text{wu} = $ 1k
& 100.0 & 5 & 0.142 & 0.355 & 69.1
& 100.0 & 4 & 0.134 & 0.338 & 61.9
& 98.4  & 5 & 0.128 & 0.313 & 52.6 \\
$T_\text{wu} = $ 5k
& 100.0 & 7 & 0.141 & 0.343 & 61.7
& 100.0 & 7 & 0.129 & 0.308 & 51.4
& 98.5  & 7 & \textbf{0.124} & 0.294 & 47.4 \\
$T_\text{wu} = $ 10k
& 100.0 & 7 & \textbf{0.141} & 0.339 & 63.4
& 99.9  & 10 & \textbf{0.128} & 0.302 & 52.0
& 98.0  & 12 & \textbf{0.124} & 0.284 & 47.8 \\
$T_\text{wu} = $ 15k
& 100.0 & 10 & 0.142 & 0.337 & 67.0
& 99.9  & 12 & 0.129 & 0.299 & 51.4
& 97.7  & 13 & \textbf{0.124} & 0.282 & 47.3 \\
$T_\text{wu} = $ 20k
& 100.0 & 11 & 0.142 & 0.334 & \textbf{60.9}
& 99.8  & 13 & 0.129 & 0.297 & 52.6
& 97.6  & 14 & 0.125 & 0.281 & 47.4 \\
$T_\text{wu} = $ 30k
& 100.0 & 14 & 0.142 & \textbf{0.332} & 61.8
& 99.8  & 15 & 0.130 & 0.295 & \textbf{49.8}
& 97.2  & 16 & 0.126 & 0.279 & 46.7 \\
$T_\text{wu} = $ 40k
& 100.0 & \textbf{19} & 0.143 & 0.334 & 61.3
& 99.8  & \textbf{17} & 0.131 & \textbf{0.294} & 50.6
& 97.0  & \textbf{18} & 0.126 & \textbf{0.279} & \textbf{43.9} \\
\bottomrule
\end{tabular}
\label{tab:vqgan_ablation}
\end{table}

%% file: tables/wavtokenizer_100ep.tex
\begin{table}[t]
\centering
\setlength{\tabcolsep}{3pt}
\caption{\textbf{WavTokenizer audio reconstruction performance on LibriTTS test-clean/test-other.} Each row pair compares cold-start WavTokenizer training against AE Warm-up at the same codebook size. \textbf{Bold} indicates the winner within each codebook size; \underline{underline} indicates the overall best. Without warm-up, codebook utilization collapses as $K$ grows and codebook effective dimension is pinned at $4$. AE Warm-up maintains near-full utilization at every $K$ and raises codebook effective dimension to $11$--$21$. Additionally, $D$ (codebook $d_\text{eff}$), Mel, PESQ, and STOI improve monotonically with $K$. Cold-start at $K=2^{16}$ is omitted given the severity of collapse at $K=2^{14}$.
}
\small
\vspace{1em}
\begin{tabular}{lcccccccc}
\toprule
Method & $|\mathcal{C}|$ & Util$\uparrow$ & $D\uparrow$ & Mel$\downarrow$ & UTMOS$\uparrow$ & PESQ$\uparrow$ & STOI$\uparrow$ & V/UV F1$\uparrow$ \\
\midrule
WavTokenizer  & 4096   & 100.0\% & 4  & 0.480/0.548 & \textbf{\underline{4.051}}/\textbf{\underline{3.563}} & \textbf{2.336}/\textbf{2.054} & \textbf{0.912}/\textbf{0.878} & \textbf{0.940}/\textbf{0.916} \\
AE Warm-up    & 4096   & 100.0\% & 11 & \textbf{0.469}/\textbf{0.526} & 3.980/3.484 & 2.275/2.032 & 0.907/0.873 & 0.931/0.905 \\
\midrule
WavTokenizer  & 8192   & 16.9\%  & 4  & 0.510/0.584 & \textbf{4.009}/\textbf{3.547} & 2.179/1.937 & 0.902/0.867 & \textbf{0.934}/\textbf{0.908} \\
AE Warm-up    & 8192   & 100.0\% & 17 & \textbf{0.448}/\textbf{0.504} & 3.991/3.480 & \textbf{2.425}/\textbf{2.147} & \textbf{0.915}/\textbf{0.882} & 0.933/0.907 \\
\midrule
WavTokenizer  & 16384  & 8.4\%   & 4  & 0.497/0.566 & 4.007/\textbf{3.539} & 2.209/1.977 & 0.904/0.870 & 0.936/0.912 \\
AE Warm-up    & 16384  & 100.0\% & 19 & \textbf{0.432}/\textbf{0.486} & \textbf{4.041}/3.536 & \textbf{2.514}/\textbf{2.229} & \textbf{0.920}/\textbf{0.889} & \textbf{\underline{0.942}}/\textbf{\underline{0.917}} \\
\midrule
AE Warm-up    & 65536  & 99.7\%  & 21 & \underline{0.428}/\underline{0.480} & 4.050/3.521 & \underline{2.626}/\underline{2.328} & \underline{0.925}/\underline{0.895} & 0.940/0.915 \\
\bottomrule
\end{tabular}
\label{tab:wavtok_comparison}
\end{table}

%% file: sections/08_discussion.tex
\section{Discussion and Limitations}
\label{sec:discussion}

The bottleneck in VQ-VAE training is the span of the latent subspace the encoder delivers to the quantizer, not the number of codes. AE warm-up addresses this directly, and because it acts before quantization begins, it is agnostic to the downstream quantizer: residual VQ~\citep{zeghidour2021soundstream, kumar2023dac, zhang2023speechtokenizer, defossez2024moshi}, product quantization~\citep{zhang2025dcvq}, and structural reparameterizations such as SimVQ~\citep{zhu2025simvq} are all compatible. Indeed, the mode-suppression mechanism operates within each residual stage or product factor, so these methods stand to benefit from warm-up as well.

\textbf{Scope and future work.} \ \ Our theory assumes linear encoders and decoders with independent Gaussian inputs --- enough to expose the mechanism, but restrictive. Extending to nonlinear encoders and predicting the loss floor quantitatively in realistic models are natural next steps. The method also raises practical questions: how far can codebook size be pushed once warm-up is in place, and which existing codebook-geometry techniques continue to work in the higher-$d_\text{eff}$ regime warm-up unlocks?

%% file: sections/09_appendix.tex
\section{Proofs for Section~\ref{sec:rdae}}
\label{app:rdae-proofs}

\subsection{Dense RD-AE flow}
\label{app:dense-flow}

We derive \eqref{eq:dense-W1}--\eqref{eq:dense-W2} using the notation of Section~\ref{sec:setup}. All channel parameters $(c_j, \tau_j^2, D_j, D^\star)$ are treated as constant throughout, consistent with the assumption that the quantizer remains optimal for the current latent statistics.

\paragraph{Channel in the eigenbasis.}
We denote the covariance matrix of the encoder latent space as $\Sigma_z$, and write the diagonal decomposition $\Sigma_z = W_1 \Sigma W_1^\top = U \Lambda U^\top$. In coordinates $\tilde{z} = U^\top z$, the components $\tilde{z}_j \sim \calN(0, \lambda_j)$ are independent, so the rate-$R$ channel of Section~\ref{sec:setup} applies coordinatewise with parameters $(c_j, \tau_j^2, D_j)$ as in~\eqref{eq:rd-channel-params}. 

Returning to the original basis, $\E[z_q \mid z] = Mz$ with $M := U\diag(c_1, \ldots, c_d)U^\top$. Since $z_{q,j}$ is the MMSE estimate of $\tilde{z}_j$, the residual $n := z - z_q$ satisfies $z_q \perp n$ by Gaussianity, with $n \sim \calN(0, U\diag(D_1, \ldots, D_d)U^\top)$.

\paragraph{Joint moments.}
The decomposition $z = z_q + n$ with $z_q \perp n$ gives
\begin{equation}
  \E[z_q z_q^\top] = U\diag(c_j \lambda_j) U^\top =: \Gamma_q, \qquad
  \E[z z_q^\top] = \Gamma_q, \qquad
  \E[x z_q^\top] = \Sigma W_1^\top M,
\end{equation}
where $M = U\diag(c_1, \ldots, c_d)U^\top$. The first two follow because $z_q \perp n$ implies $\E[z z_q^\top] = \E[z_q z_q^\top]$. The third uses $z_q = M z + (\text{noise})$ and $z = W_1 x$, with $M$ symmetric.

\paragraph{Decoder gradient.}
From $L_\rec = \E\|x - W_2 z_q\|^2 = \tr\Sigma - 2\tr(W_2 \E[z_q x^\top]) + \tr(W_2^\top W_2 \Gamma_q)$, we treat the channel parameters $(M, \Gamma_q)$ as constant and differentiate:
\begin{equation}
  \nabla_{W_2} L_\rec = -2\Sigma W_1^\top M + 2 W_2 \Gamma_q.
  \label{eq:dense-decoder-grad}
\end{equation}

$L_\text{com}$ does not depend on $W_2$, so $\nabla_{W_2}L_\text{com} = 0$.

\paragraph{Encoder reconstruction gradient under STE.}
The straight-through estimator uses $\partial z_q / \partial z := I$ in differentiation. %
Equivalently, replace $z_q$ by the surrogate $z_q^\mathrm{surr} = z + \mathrm{sg}(z_q - z) = W_1 x + s$ where $s := z_q - W_1 x$ is treated as a constant. The reconstruction loss becomes
\begin{equation}
  L_\rec^\mathrm{STE} \;=\; \E\|x - W_2 W_1 x - W_2 s\|^2,
\end{equation}
identical in value to $L_\rec$ but with $W_1$-dependence routed only through $W_1 x$. Differentiating in $W_1$ with $s$ constant,
\begin{equation}
  \nabla_{W_1} L_\rec |^\mathrm{STE} \;=\; -2 W_2^\top \E[(x - W_2 z_q) x^\top] \;=\; -2 W_2^\top \Sigma + 2 W_2^\top W_2 \E[z_q x^\top],
\end{equation}
where the residual $x - W_2 W_1 x - W_2 s = x - W_2 z_q$ uses the true $z_q$. Substituting $\E[z_q x^\top] = M W_1 \Sigma$,
\begin{equation}
  \nabla_{W_1} L_\rec |^\mathrm{STE} \;=\; 2(W_2^\top W_2 M W_1 - W_2^\top) \Sigma.
  \label{eq:dense-encoder-rec-grad}
\end{equation}

\paragraph{Encoder commitment gradient.}
The commitment term $L_\com = \E\|z - \mathrm{sg}(z_q)\|^2$ treats $z_q$ as constant during differentiation, so it only depends on $W_1$ through $z = W_1 x$:
\begin{equation}
  L_\com \;=\; \tr(W_1 \Sigma W_1^\top) - 2 \E[z^\top \mathrm{sg}(z_q)] + \mathrm{const}.
\end{equation}
Differentiating, $\nabla_{W_1} \tr(W_1 \Sigma W_1^\top) = 2 W_1 \Sigma$ and $\nabla_{W_1} \E[z^\top \mathrm{sg}(z_q)] = \E[\mathrm{sg}(z_q) x^\top]$ has population mean $M W_1 \Sigma$, giving
\begin{equation}
  \nabla_{W_1} L_\com \;=\; 2 (W_1 - M W_1) \Sigma.
  \label{eq:dense-encoder-com-grad}
\end{equation}

\paragraph{Assembling.}
Combining \eqref{eq:dense-decoder-grad}, \eqref{eq:dense-encoder-rec-grad}, \eqref{eq:dense-encoder-com-grad}, we arrive at the flow in \eqref{eq:dense-W1}--\eqref{eq:dense-W2}:
\begin{align}
  \dot W_1 \;&=\; -2(W_2^\top W_2 M W_1 - W_2^\top)\Sigma \;-\; 2\beta(W_1 - M W_1)\Sigma, \label{eq:dense-flow-W1}\\
  \dot W_2 \;&=\; 2 \Sigma W_1^\top M - 2 W_2 \Gamma_q. \label{eq:dense-flow-W2}
\end{align}

\subsubsection{Dense water-level evolution} In the main text, we examined the water-level dynamics under diagonal initialization of $W_1$ and $W_2$ (Eq. \eqref{eq:rdae-logistic}, derivation in \S\ref{app:water-level-diagonal}), which characterizes how $D*$ rises during RD-AE training to remove modes from the active set . We derive the water-level dynamics under dense initialization here.

Recall that the water-fill constraint $\sum_{j \in A} \tfrac{1}{2}\log_2(\lambda_j/D^\star) = R$ couples $D^\star$ to the eigenvalues of $\Sigma_z(t)$, which themselves evolve through $W_1(t)$. To track $\dot \lambda_j$, differentiate the eigenvalue identity $\Sigma_z u_j = \lambda_j u_j$ and the normalization $u_j^\top u_j = 1$:
\begin{equation}
  \dot \Sigma_z \, u_j + \Sigma_z \dot u_j \;=\; \dot \lambda_j u_j + \lambda_j \dot u_j, \qquad u_j^\top \dot u_j = 0.
\end{equation}
Left-multiplying by $u_j^\top$ and using $u_j^\top \Sigma_z = \lambda_j u_j^\top$ gives
\begin{equation}
  \dot \lambda_j \;=\; u_j^\top \dot \Sigma_z\, u_j,
  \qquad \dot \Sigma_z \;=\; \dot W_1 \Sigma W_1^\top + W_1 \Sigma \dot W_1^\top.
  \label{eq:eigval-evolution}
\end{equation}
The water-level evolution follows by differentiating the rate constraint at fixed active set:
\begin{equation}
  \frac{\dot D^\star}{D^\star} \;=\; \frac{1}{k} \sum_{j \in A} \frac{\dot \lambda_j}{\lambda_j}
    \;=\; \frac{1}{k} \sum_{j \in A} \frac{u_j^\top \dot \Sigma_z u_j}{\lambda_j}.
  \label{eq:dense-water-level}
\end{equation}
Between water-crossings, $A$ and $k$ are constant, and \eqref{eq:dense-flow-W1}--\eqref{eq:dense-water-level} close the dynamics of $(W_1, W_2, D^\star)$. At a crossing, $A$ updates discretely, $k$ decrements by one, and $D^\star$ is continuous.

\subsection{RD-AE flow derivation (diagonal case)}
\label{app:rdae-gradients}

We derive the diagonal RD-AE flow~\eqref{eq:rdae-flow} stated in Section~\ref{sec:rdae-dynamics}. Throughout we assume that $W_1 = \diag(u_j)$, $W_2 = \diag(v_j)$. The RD channel of rate $R$ bits acts coordinatewise on $z = W_1 x$.

The channel parameters $c_j, \tau_j^2, D^\star$ are treated as constant while differentiating the loss, as in~\S\ref{app:dense-flow}. However, $c_j$ and $D^\star$ change over time as the latent variances $\lambda_j = u_j^2 \sigma_j^2$ evolve, and this coupling is derived separately in~\S\ref{app:water-level-diagonal}.

\paragraph{Key moments.}
In the diagonal case, the dense decomposition of~\S\ref{app:dense-flow} reduces to $z_j = z_{q,j} + n_j$ with $n_j \sim \calN(0, D_j)$ independent of $z_{q,j}$, giving
\begin{equation}
  \E[z_{q,j}^2] = c_j \lambda_j, \qquad
  \E[z_j \, z_{q,j}] = c_j \lambda_j, \qquad
  \E[x_j \, z_{q,j}] = c_j u_j \sigma_j^2.
\end{equation}

\paragraph{Decoder gradient.}
The decoder direct gradient updates. From $\hat x_j = v_j z_{q,j}$,
\begin{equation}
  \frac{\partial L_\rec}{\partial v_j}
  \;=\; -2 \, \E\bigl[(x_j - v_j z_{q,j}) z_{q,j}\bigr]
  \;=\; -2 \bigl(\E[x_j z_{q,j}] - v_j \E[z_{q,j}^2]\bigr)
  \;=\; -2 c_j u_j \sigma_j^2 (1 - u_j v_j),
\end{equation}
using $c_j \lambda_j = c_j u_j^2 \sigma_j^2$ in the last step. The commitment term does not depend on $v_j$, so $\partial L_\com/\partial v_j = 0$.

\paragraph{Encoder reconstruction gradient under STE.}
The straight-through estimator redefines the backward pass to treat $\partial z_{q,j} / \partial z_j := 1$, while the forward pass uses the actual $z_{q,j}$ from the channel. Concretely, STE replaces $z_{q,j}$ with a surrogate $z_{q,j}^\mathrm{surr} = z_j + \mathrm{sg}(z_{q,j} - z_j)$, which equals $z_{q,j}$ in value but has identity Jacobian in $z_j$. Under this surrogate, the reconstruction $\hat x_j = v_j z_{q,j}^\mathrm{surr}$ satisfies $\partial \hat x_j / \partial u_j = v_j \, \partial z_j / \partial u_j = v_j x_j$, and
\begin{equation}
  \frac{\partial L_\rec}{\partial u_j}\bigg|_\mathrm{STE}
  \;=\; -2 v_j \, \E\bigl[(x_j - v_j z_{q,j}) x_j\bigr]
  \;=\; -2 v_j \bigl(\sigma_j^2 - v_j \E[x_j z_{q,j}]\bigr)
  \;=\; -2 v_j \sigma_j^2 (1 - c_j u_j v_j).
\end{equation}
The value of $z_{q,j}$ inside the residual uses the true channel output, not the surrogate --- only the backward Jacobian is replaced.

\paragraph{Encoder commitment gradient.}
The commitment term $L_\com = \E \|z - \mathrm{sg}(z_q)\|^2$ treats $z_q$ as constant during differentiation. Since $z_j = u_j x_j$, differentiating with respect to $u_j$ gives:
\begin{equation}
  \frac{\partial L_\com}{\partial u_j}
  \;=\; 2 \, \E\bigl[(z_j - z_{q,j}) x_j\bigr]
  \;=\; 2 \bigl(u_j \sigma_j^2 - \E[x_j z_{q,j}]\bigr)
  \;=\; 2 u_j \sigma_j^2 (1 - c_j) \;=\; \frac{2 D_j}{u_j},
\end{equation}
where the last equality uses $D_j = \lambda_j(1 - c_j) = u_j^2 \sigma_j^2 (1 - c_j)$.

\paragraph{Assembling the flow.}
The gradient flow is $\dot u_j = -\bigl(\partial L_\rec/\partial u_j|^\mathrm{STE} + \beta \, \partial L_\com/\partial u_j\bigr)$ and $\dot v_j = -\partial L_\rec/\partial v_j$. Substituting gives
\begin{equation}
  \dot u_j \;=\; 2\sigma_j^2\, v_j (1 - c_j u_j v_j) \;-\; \frac{2\beta D_j}{u_j}, \qquad
  \dot v_j \;=\; 2\sigma_j^2\, c_j u_j (1 - u_j v_j),
\end{equation}
as stated in the main text.

\subsection{Water-level dynamics in the diagonal case}
\label{app:water-level-diagonal}

Here, we derive \eqref{eq:water-level-dynamics} from Section~\ref{sec:rdae-dynamics}, which characterizes how the reverse water-filling variance threshold rises throughout RD-AE training. As described in the main text, this is extremely consequential: it progressively decreases the active set of modes which are represented by the quantizer.

Given a rate budget $R$, a classical result~\citep[Ex.
I.9]{polyanskiy2025information} relates the rate-distortion water level $D^\star$ to the eigenvalues $\lambda_j$ via the constraint:
\begin{equation}
  \textstyle\sum_j \tfrac{1}{2} \log_2^+(\lambda_j/D^\star) \;=\; R,
  \label{eq:water-fill-app}
\end{equation}
We define the set of active modes to be $A := \{i: \lambda_i > D*\}$. The sum \eqref{eq:water-fill-app} then reduces to 
\begin{equation}
  \sum_{i \in A} \tfrac{1}{2} \log_2(\lambda_i / D^\star) \;=\; R.
  \label{eq:water-fill-app}
\end{equation}
We derive its dependence on $u_i$ (equivalently, on $\lambda_i = u_i^2 \sigma_i^2$) by implicit differentiation.

Rewriting~\eqref{eq:water-fill-app} as $\sum_{i \in A} \log_2 \lambda_i - k \log_2 D^\star = 2R$, where $k = |A|$, and differentiating in $u_\ell$ for $\ell \in A$:
\begin{equation}
  \frac{1}{\lambda_\ell \ln 2} \cdot \frac{\partial \lambda_\ell}{\partial u_\ell} \;-\; \frac{k}{D^\star \ln 2} \cdot \frac{\partial D^\star}{\partial u_\ell} \;=\; 0.
\end{equation}
Using $\partial \lambda_\ell / \partial u_\ell = 2 u_\ell \sigma_\ell^2 = 2 \lambda_\ell / u_\ell$, this gives
\begin{equation}
  \frac{\partial D^\star}{\partial u_\ell} \;=\; \frac{2 D^\star}{k\, u_\ell} \qquad \text{for } \ell \in A.
  \label{eq:water-level-static}
\end{equation}
For $\ell \notin A$, mode $\ell$ does not enter~\eqref{eq:water-fill-app} and $\partial D^\star / \partial u_\ell = 0$.

The time-derivative form follows by the chain rule along a flow trajectory:
\begin{equation}
  \frac{\dot D^\star}{D^\star} \;=\; \sum_{\ell \in A} \frac{1}{D^\star} \frac{\partial D^\star}{\partial u_\ell} \dot u_\ell \;=\; \frac{2}{k}\sum_{\ell \in A} \frac{\dot u_\ell}{u_\ell}.
  \label{eq:water-level-dynamic}
\end{equation}

As active modes grow, $D^\star$ rises via \eqref{eq:water-fill-app}. When $D^\star$ reaches $\lambda_j$ for the weakest active mode, that mode transitions to inactive. Onceinactive, it remains so: at $\beta = 1$, inactive modes are frozen ($\dot{r}_j = 0$ from \eqref{eq:rdae-logistic}), so $\lambda_j$ cannot grow back above $D^\star$.

\subsection{Per-mode reduction at $\beta = 1$ and balanced init}
\label{app:beta1-reduction}

We specialize the flow to $\beta = 1$ and balanced initialization $u_j(0) = v_j(0) = \varepsilon$. We show that (i) balance is preserved on active modes, (ii) the active-mode activation $r_j = u_j v_j = u_j^2$ obeys a Saxe-like logistic, and (iii) inactive modes at balance are frozen.

\paragraph{Balance preservation on active modes.}
From the flow with $\beta = 1$ and $j \in A$:
\begin{equation}
  \dot u_j - \dot v_j \;=\; 2\sigma_j^2 v_j(1 - c_j u_j v_j) - \frac{2 D_j}{u_j} - 2\sigma_j^2 c_j u_j (1 - u_j v_j).
\end{equation}
Evaluating at $u_j = v_j =: a_j$ and using $D_j = D^\star = (1 - c_j)\lambda_j = (1 - c_j) a_j^2 \sigma_j^2$ on active modes,
\begin{equation}
  \dot u_j - \dot v_j\big|_{u_j = v_j}
  \;=\; 2\sigma_j^2 a_j (1 - c_j a_j^2) - 2\sigma_j^2 (1 - c_j) a_j - 2\sigma_j^2 c_j a_j (1 - a_j^2)
  \;=\; 0.
\end{equation}
Hence $u_j(t) = v_j(t)$ for all $t$ on active modes; write $a_j(t)$ for this common value.

\paragraph{Active-mode logistic.}
At balance, $\dot u_j = \dot v_j$, so we compute either. From $\dot v_j = 2\sigma_j^2 c_j u_j(1 - u_j v_j)$ with $u_j = v_j = a_j$:
\begin{equation}
  \dot a_j \;=\; 2\sigma_j^2 c_j a_j (1 - a_j^2).
\end{equation}
With $r_j = a_j^2$, $\dot r_j = 2 a_j \dot a_j$, giving
\begin{equation}
  \dot r_j \;=\; 4\sigma_j^2 c_j \, r_j(1 - r_j) \qquad \text{for } j \in A,
\end{equation}
which is Saxe's logistic~\eqref{eq:ae-sigmoid} with timescale $1/(4\sigma_j^2)$ stretched to $1/(4\sigma_j^2 c_j)$ by the channel gain.

\paragraph{Inactive modes.}
For $j \notin A$, $c_j = \tau_j^2 = 0$ and $D_j = \lambda_j = u_j^2 \sigma_j^2$. The flow reduces to
\begin{equation}
  \dot u_j \;=\; 2\sigma_j^2 v_j - \frac{2 u_j^2 \sigma_j^2}{u_j} \;=\; 2\sigma_j^2(v_j - u_j), \qquad
  \dot v_j \;=\; 0.
\end{equation}
At balance $u_j = v_j$, $\dot u_j = 0$ and $\dot v_j = 0$, so inactive modes at balance are stationary.

\subsection{Plain-AE mode activation (closed form)}
\label{app:ae-closed-form}

We derive the closed-form solution of the plain-AE flow from \citet{saxe2014exact} for completeness and reference in the warm-up theorem proof. The plain-AE gradient flow from the decoupled loss is
\begin{equation}
  \dot u_j \;=\; 2\sigma_j^2 v_j (1 - u_j v_j), \qquad \dot v_j \;=\; 2\sigma_j^2 u_j (1 - u_j v_j).
\end{equation}
From $\dot u_j - \dot v_j = -2\sigma_j^2 (1 - u_j v_j)(u_j - v_j)$, the balance gap $u_j - v_j$ decays to zero from any initial condition; in particular, balanced initialization $u_j(0) = v_j(0) = \varepsilon$ is preserved. Writing $a_j(t)$ for the common value and $r_j = a_j^2$,
\begin{equation}
  \dot a_j \;=\; 2\sigma_j^2 a_j (1 - a_j^2),
  \qquad
  \dot r_j \;=\; 4\sigma_j^2 r_j(1 - r_j).
\end{equation}
Separating variables, $\mathrm{d}r_j / [r_j(1 - r_j)] = 4\sigma_j^2 \, \mathrm{d}t$, and integrating from $r_j(0) = \varepsilon^2$ gives
\begin{equation}
  r_j(t) \;=\; \Bigl(1 + \tfrac{1 - \varepsilon^2}{\varepsilon^2} e^{-4\sigma_j^2 t}\Bigr)^{-1}.
\end{equation}

\subsection{Proof of Theorem~\ref{thm:warmup}}
\label{app:warmup}

The proof has three steps: (i) identify the active set at the switch time, (ii) show warmup-inactive modes stay frozen, (iii) derive the loss bound.

\paragraph{State at the switch.}
Plain-AE training from balanced initialization $W_1(0) = W_2(0) = \varepsilon I$ preserves diagonality and balance (\S\ref{app:ae-closed-form}), so at time $T_\mathrm{wu}$ the weights are $W_1 = W_2 = \diag(\sqrt{g_j(T_\mathrm{wu})})$ and the latent eigenvalues are $\lambda_j(T_\mathrm{wu}) = g_j(T_\mathrm{wu})\,\sigma_j^2$. Since $\sigma_j^2$ is strictly decreasing in $j$ and $g_j(T_\mathrm{wu})$ is strictly decreasing in $j$ (larger $\sigma_j^2$ gives faster Saxe activation), $\lambda_j(T_\mathrm{wu})$ is strictly decreasing in $j$.

The active set at the switch is therefore a top prefix $\{1, \ldots, m\}$, where $m$ is the maximal integer satisfying the water-fill consistency condition $\lambda_m(T_\mathrm{wu}) > D^\star_0$, with $D^\star_0 = \bigl(\prod_{i \le m} \lambda_i(T_\mathrm{wu})\bigr)^{1/m} 2^{-2R/m}$. Rearranging yields $R > \tfrac{1}{2}\sum_{i<m}\log_2(\sigma_i^2 g_i / \sigma_m^2 g_m)$, so $m = m^\mathrm{wu}(T_\mathrm{wu}, R)$.

\paragraph{Warmup-inactive modes are frozen forever.}
For $j > m^\mathrm{wu}$, mode $j$ is inactive at the switch ($c_j = \tau_j^2 = 0$) and balanced. By \S\ref{app:beta1-reduction}, $\dot u_j = \dot v_j = 0$ at balance, so $\lambda_j$ is constant. The water level $D^\star$ is non-decreasing along the flow (\S\ref{app:water-level-diagonal}), so $\lambda_j < D^\star$ for all $t \ge T_\mathrm{wu}$, and mode $j$ remains inactive.

\paragraph{The active set shrinks as a prefix.}
Active modes have $\dot r_j = 4\sigma_j^2 c_j r_j(1 - r_j) \ge 0$ (\S\ref{app:beta1-reduction}), so $\lambda_j$ is non-decreasing on the active set. Combined with monotonic $D^\star$, modes can only exit the active set, not enter. Since $\lambda_j$ remains ordered (larger $\sigma_j^2$ drives faster growth, preserving the decreasing sequence), the active set at any later time is a prefix $\{1, \ldots, k\}$ with $k \le m^\mathrm{wu}$. Any surviving mode converges to $r_j = 1$, giving $A_\infty = \{1, \ldots, k_\infty\}$ with $k_\infty \le m^\mathrm{wu}$ and $\lambda_j = \sigma_j^2$ on $A_\infty$.

\paragraph{Loss bound.}
At final state, the reconstruction loss is
\begin{equation}
  L_\rec^\infty
  \;=\; k_\infty\, D^\star_\infty \;+\; \sum_{j > k_\infty} \sigma_j^2
  \;=\; k_\infty \bigl(\textstyle\prod_{i \le k_\infty} \sigma_i^2\bigr)^{1/k_\infty} 2^{-2R/k_\infty} \;+\; \sum_{j > k_\infty} \sigma_j^2,
\end{equation}
since each surviving mode pays water-fill distortion $D^\star_\infty = \delta_{k_\infty}(R)$ and each inactive mode pays $\sigma_j^2$ (the decoder receives zero on inactive coordinates). This is the Shannon distortion of an idealized scheme that activates the top $k_\infty$ modes and ignores the rest. Since $k_\infty \le m^\mathrm{wu}$ and the Shannon optimal scheme for the top $m^\mathrm{wu}$ source at rate $R$ activates all $m^\mathrm{wu}$ modes (with distortion $m^\mathrm{wu} \delta_{m^\mathrm{wu}}(R) + \sum_{j > m^\mathrm{wu}} \sigma_j^2$), the $k_\infty$-mode scheme is suboptimal, and
\begin{equation}
  L_\rec^\infty \;\ge\; m^\mathrm{wu} \, \delta_{m^\mathrm{wu}}(R) \;+\; \sum_{j > m^\mathrm{wu}} \sigma_j^2.
\end{equation}
\qed

\subsubsection{Numerical validation of Theorem~\ref{thm:warmup}}
\label{app:theorem-synthetic}

\begin{figure}[t]
  \centering
  \includegraphics[width=\linewidth]{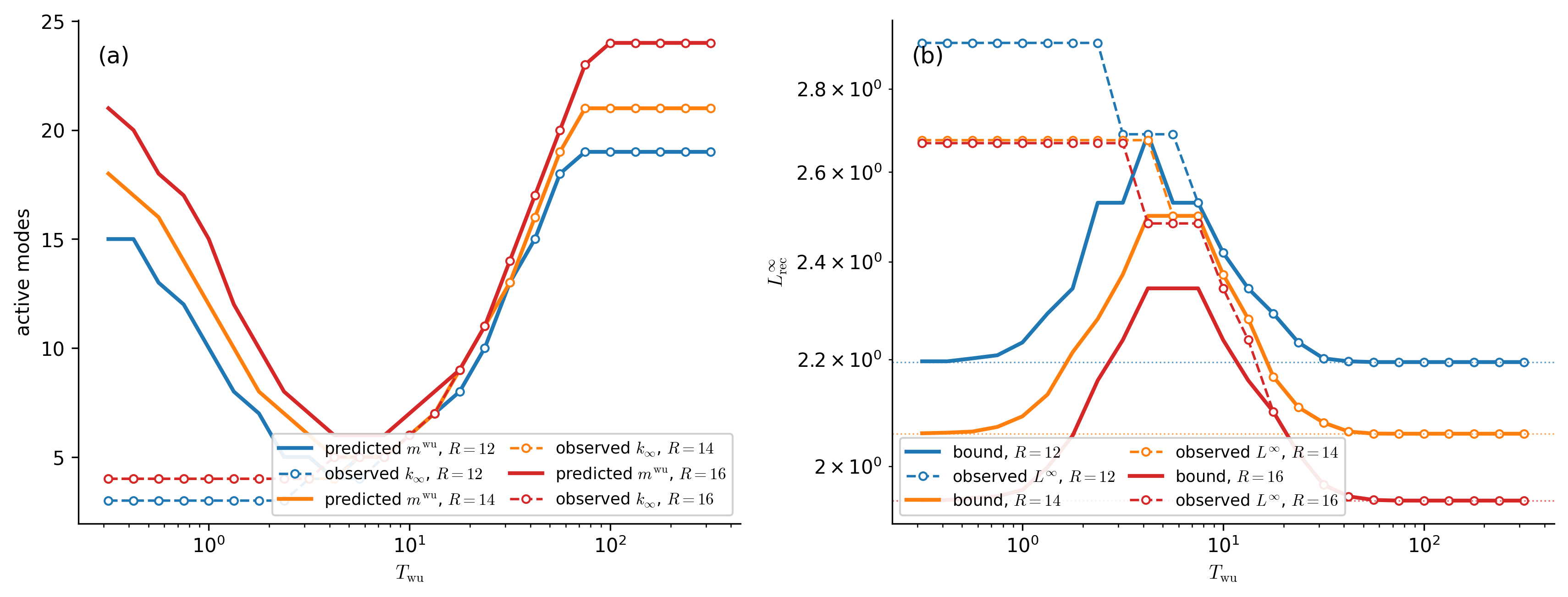}
  \caption{\textbf{Warmup theorem: predictions vs.\ simulation.} Balanced
    initialization $W_1(0) = W_2(0) = \varepsilon I$ with $\varepsilon = 0.1$,
    $\beta = 1$, $d = 64$, $\sigma_j^2 = j^{-1}$, three rates
    $R \in \{12, 14, 16\}$ bits. \textbf{(a)} Active-mode count at convergence.
    Solid lines: predicted $m^\mathrm{wu}(T_\mathrm{wu}, R)$
    from~\eqref{eq:m-wu}. Dashed lines: observed $k_\infty$ from simulated
    RD-AE flow. \textbf{(b)} Reconstruction loss. Solid lines: loss bound
    from~\eqref{eq:loss-bound}. Dashed lines: observed $L_\mathrm{rec}^\infty$.
    Dotted horizontal lines mark the Shannon floor $D(R)$ for each rate. As
    $T_\mathrm{wu}$ grows, predictions and observations both approach the
    Shannon limit; at small $T_\mathrm{wu}$ the bound is vacuous and the
    observed collapse is severe. %
    }
  \label{fig:theory-warmup}
\end{figure}

Figure~\ref{fig:theory-warmup} compares the theorem to simulation across warm-up durations in the linear RD-AE setting. Left panel: $m^\mathrm{wu}(T_\mathrm{wu}, R)$ from~\eqref{eq:m-wu} against the observed final active-set size $k_\infty$ in RD-AE simulation, for $R \in \{10, 14, 18\}$ bits. Right panel: the loss bound~\eqref{eq:loss-bound} against the observed final $L_\mathrm{rec}^\infty$, with the rate-distortion floor $D(R)$ as reference. The figure shows $T_\mathrm{wu}$ in the regime where the bound is informative; at $T_\mathrm{wu} = 0$ the bound reduces to $D(R)$ and the observed loss exceeds it. The bound tightens as $T_\mathrm{wu}$ grows, and both $k_\infty$ and $L_\mathrm{rec}^\infty$ approach the Shannon limit.

\section{Experiment Details}
\label{app:experiment_details}

\input{figures/ae_activation_adam.tex}

\subsection{VQGAN Experiments}
\label{app:vqgan_experiment_detail}

\paragraph{Architecture and loss.}
We use the f16 VQGAN encoder--decoder from \citet{esser2021vqgan} with the \texttt{imagenet\_vqgan.yaml} configuration (128 base channels, multipliers $[1,1,2,2,4]$, two residual blocks per level, self-attention at $16\times 16$, 256-channel bottleneck, 256-dimensional codebook embedding space). At $128\times 128$ input resolution each image is tokenized to an $8\times 8$ grid of 64 discrete tokens. As per the original VQGAN training recipe, the discriminator is disabled for the first 250k training steps, the remaining loss is $\mathcal{L} = \mathcal{L}_{L_1} + \mathcal{L}_{\text{LPIPS}} + \lambda_{\text{cb}}\,\mathcal{L}_{\text{VQ}}$ with $\lambda_{\text{cb}} = 1.0$ and commitment weight $\beta = 0.25$, following \citet{esser2021vqgan}.

\paragraph{Training.}
All runs use the default VQGAN optimizer settings: Adam with $(\beta_1, \beta_2) = (0.5, 0.9)$, no weight decay, constant effective learning rate $2.88\times 10^{-4}$, and 100{,}000 steps. We emphasize that any ``warm-up'' referenced in this paper is the AE phase on the model, not an LR schedule. Runs are distributed across $4\times$ A100 40\,GB GPUs at batch size 16/GPU (effective batch size 64), completing in approximately 5.5 hours of wall-clock time.

\paragraph{Codebook handling.}
\emph{Vanilla VQGAN} initializes the codebook from $\mathcal{U}(-1/K, 1/K)$ and updates it via straight-through gradients. \emph{VQGAN w/ Respawn} and all \emph{AE Warm-up} recipes use $k$-means initialization on encoder outputs (100 Lloyd iterations over the full training set) with EMA updates, and respawn any code whose EMA usage (decay 0.99) falls below 0.01 from a random encoder output in the current batch. Respawn masks are broadcast from rank 0 to prevent NCCL desynchronization.

\paragraph{Data.}
We start with the ImageNet-100 dataset from \citet{tian2020contrastive} and sample 20,000 training images uniformly at random. We retain the entire 5,000-image validation set. Images are center-cropped and resized to $128 \times 128$.

\paragraph{Evaluation.}
All metrics in Table~\ref{tab:vqgan_ablation} are computed on the 5{,}000-image validation set at step 100{,}000. L1 and LPIPS (VGG-16 backbone) are the standard pixel and perceptual reconstruction terms; Rec Loss is their sum. CB Util\% is the fraction of codes receiving at least one assignment on the validation set, and CB Dim is the 99\%-variance PCA effective dimension of the codebook entries. rFID uses the InceptionV3 backbone via \texttt{torchmetrics}.

\subsection{WavTokenizer Experiments}
\label{app:wavtokenizer-experiments}

\paragraph{Architecture.}
We use the official WavTokenizer 75\,FPS architecture from \citet{ji2025wavtokenizer} with a 512-dimensional pre-quantization latent and a single codebook. Unless otherwise noted, all architectural and loss hyperparameters are inherited from the released 75\,FPS checkpoint configuration, including the convolutional encoder/decoder backbone, the multi-period and multi-scale STFT discriminators, and the full reconstruction loss suite (time-domain, mel-spectrogram, feature matching, adversarial). We vary only the codebook size $K \in \{4096, 8192, 16384, 65536\}$ and the training regime (cold-start vs.\ AE warm-up).

\paragraph{Training.}
All runs train for 100 epochs on $2\times$ H100 GPUs with a per-GPU batch size of 40; at 75\,FPS this corresponds to 9{,}000 latent vectors per GPU per batch. The optimizer, learning-rate schedule, discriminator schedule, and loss weightings follow the released WavTokenizer recipe for the 75FPS model with codebook size 4096. All runs use the same random seed (the WavTokenizer default); we did not run multiple seeds due to compute constraints--each run takes approximately 7 days on 2XH100s.

\paragraph{AE warm-up protocol.}
For warm-start runs, we first train the encoder and decoder as a plain autoencoder for 43 epochs with $\beta = 0$ and $Q(z) = z$, \textbf{retaining the full non-VQ loss suite (reconstruction + discriminator)}. This single warm-up checkpoint is reused across all codebook sizes. At epoch 43, we introduce VQ: we initialize the codebook via scikit-learn $k$-means on encoder outputs for $K \in \{4096, 8192, 16384\}$, and fall back to random initialization for $K = 65536$, where $k$-means becomes prohibitive at this codebook count. Training then proceeds for the remaining 57 epochs under the standard recipe with EMA codebook updates (decay 0.99) and respawn enabled. We did not sweep over warm-up duration due to compute constraints; 43 epochs was chosen as the point at which the latent effective dimension $d_{\text{eff}}$ visibly plateaus in Figure~\ref{fig:ae_activation_wav}, matching the stopping rule advocated in Section~\ref{sec:experiments}.

\paragraph{Codebook respawn.}
We respawn codes whose EMA usage falls below a codebook-size-dependent threshold, aggregating usage counts across both GPUs each epoch for stability at large $K$. Cold-start baselines use the WavTokenizer default threshold of 1.0. Warm-start runs scale the threshold inversely with $K$ to keep the expected number of respawned codes per epoch roughly constant:
\begin{center}
\begin{tabular}{cc}
\toprule
Codebook size $K$ & Dead-code threshold \\
\midrule
4{,}096  & 1.0 \\
8{,}192  & 0.5 \\
16{,}384 & 0.25 \\
65{,}536 & 0.06 \\
\bottomrule
\end{tabular}
\end{center}

\paragraph{Data.}
We train on both train-clean-360 and train-other-500 sampled at 24\,kHz, with audio clips of 3s, following the original WavTokenizer training recipe. Validation and model selection use 100 randomly sampled batches of combined dev-clean and dev-other; final evaluation reports metrics on both LibriTTS test-clean and test-other. For each run we select the checkpoint with the best validation loss.

\paragraph{Evaluation.}
We report five reconstruction metrics on test-clean and test-other: mel-spectrogram L1 loss (Mel), UTMOS \citep{saeki2022utmos} as a perceptual MOS predictor, wide-band PESQ \citep{rix2001pesq}, STOI \citep{taal2010stoi}, and voiced/unvoiced F1 (V/UV F1) computed following the methodology of \citet{morrison2022chunked}, using CREPE \citep{kim2018crepe} for pitch and periodicity extraction as in the reference CARGAN implementation. With the exception of  L1 loss, these match the metrics reported in \citet{ji2025wavtokenizer}. Codebook utilization is computed over the full evaluation set. and codebook effective dimension $\dim(\mathcal{C})$ is the number of PCA components needed to explain 99\% of the variance across codebook entries, matching the definition used for VQGAN in Appendix~\ref{app:vqgan_experiments}.

\section{Additional results on VQGAN}
\label{app:vqgan_experiments}

\input{figures/vqgan_100k}

\input{figures/vqgan_warmup_100k_scaling.tex}
\cref{fig:vqgan_100k} shows the training trajectories for the experiments across all codebook sizes and warm-up runs, and \cref{fig:warmup-scaling} shows the effect of the number of AE warm-up steps on final reconstruction loss after 100k total training steps. We observe that for all codebook sizes, the effect of adding warm-up steps plateaus at around 20k steps, and increasing warm-up duration past this point does not improve final loss.

\subsection{Water-filling upper-bounds final VQ-VAE}

\cref{thm:warmup} bounds the surviving codebook dimension by $m_{\text{wu}}(T_{\text{wu}}, R)$, the number of latent PCA components that lie above the Shannon water level at rate $R = \log_2 K$. The theorem is stated for the linear-Gaussian RD-AE; here we test whether the same construction transfers to nonlinear VQGAN.

\begin{figure}[t]
    \centering
    \begin{minipage}[t]{0.33\linewidth}
        \centering
        \includegraphics[width=\linewidth]{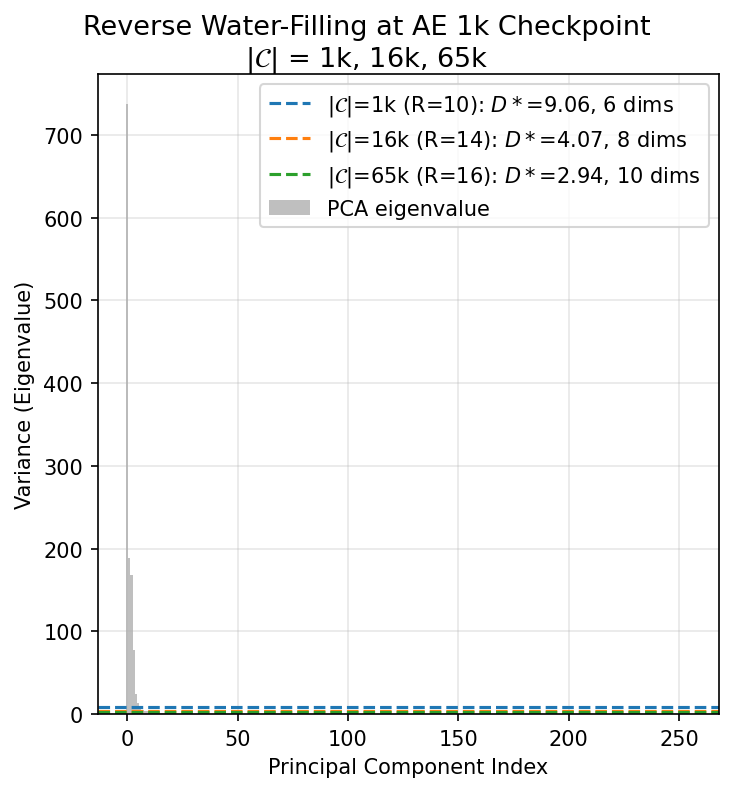}
        \caption*{1k steps}
    \end{minipage}\hfill
    \begin{minipage}[t]{0.33\linewidth}
        \centering
        \includegraphics[width=\linewidth]{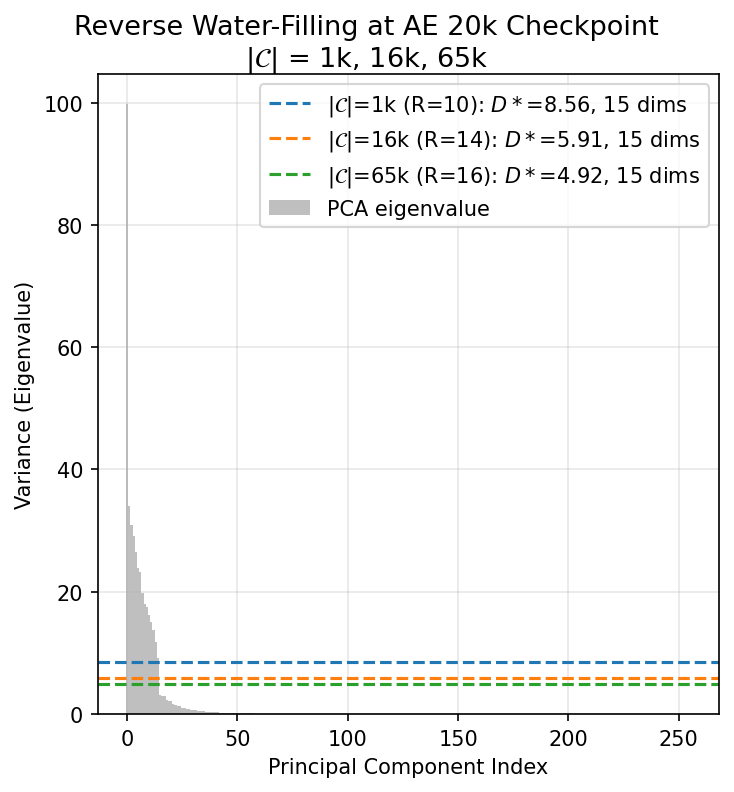}
        \caption*{10k steps}
    \end{minipage}\hfill
    \begin{minipage}[t]{0.33\linewidth}
        \centering
        \includegraphics[width=\linewidth]{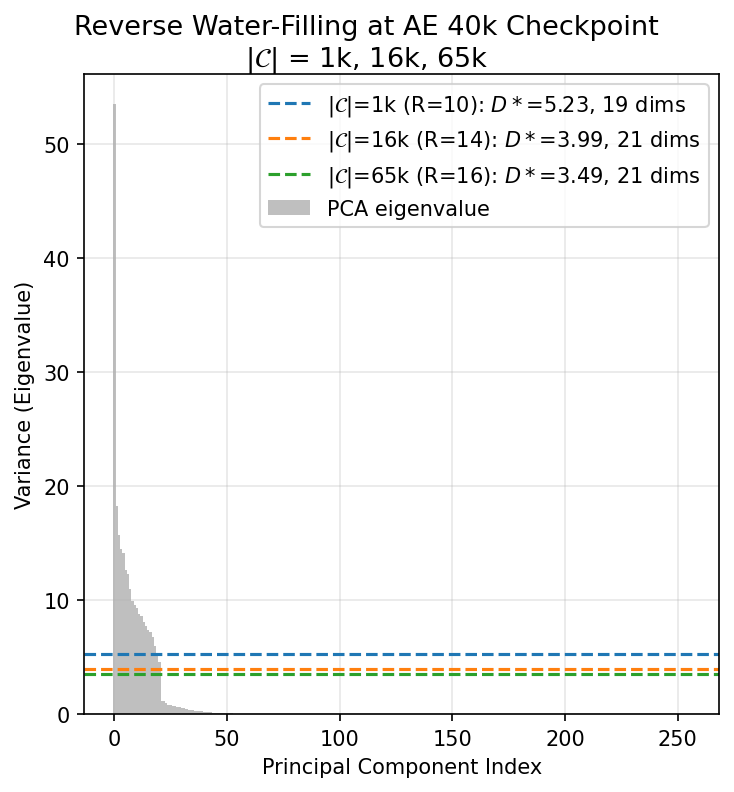}
        \caption*{40k steps}
    \end{minipage}
    \caption{\textbf{AE latent PCA spectrum and water levels at three warmup durations.} Gray bars: PCA eigenvalues of the pre-quantization latent at AE warmup steps 1k, 10k, and 40k (left to right), computed over the full ImageNet-100 (20k) training set. Dashed horizontal lines: Shannon water level $D^\star$ at rates $R = \log_2 K$ for codebook sizes $K \in \{2^{10}, 2^{14}, 2^{16}\}$, computed by reverse water-filling (Eq.~\ref{eq:water-fill}) on the displayed spectrum. The number of PCA components above each line is the predicted active-mode count $m_{\text{wu}}(T_{\text{wu}}, R)$ from \cref{thm:warmup}. Two trends visible across panels: as $T_{\text{wu}}$ grows, more modes lift above each line (weak modes activate during AE training); at fixed $T_{\text{wu}}$, larger $K$ pushes the water line lower and admits more modes. The annotations report the resulting $m_{\text{wu}}$ for each $(T_{\text{wu}}, K)$ pair.}
    \label{fig:ae-variances-waterfilling-all-cb-pca}
\end{figure}

\begin{figure}[t]
    \centering
    \includegraphics[width=1.0\linewidth]{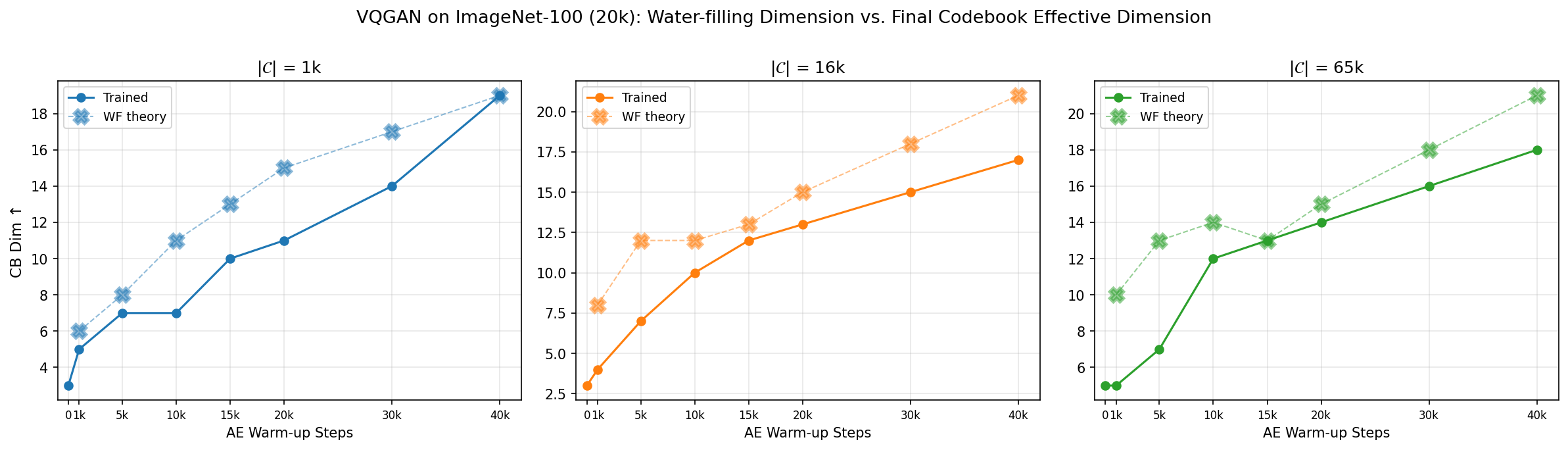}
    \caption{\textbf{Water-filling on the AE latent spectrum upper-bounds the trained VQ-VAE codebook dimension.} For each AE warm-up checkpoint $T_{\text{wu}} \in \{0, 1\text{k}, 5\text{k}, 10\text{k}, 15\text{k}, 20\text{k}, 30\text{k}, 40\text{k}\}$ and each codebook size $K \in \{2^{10}, 2^{14}, 2^{16}\}$ (one panel per $K$), we compare the water-filling prediction $m_{\text{wu}}(T_{\text{wu}}, R)$ from the AE PCA spectrum (light, ``WF theory'') against the codebook effective dimension of the corresponding VQ-VAE trained from that checkpoint to 100k total steps (dark, ``Trained''). The predicted dimension lies above the trained dimension at every point and across all three codebook sizes: water-filling on the AE latent is an empirical upper bound on what the downstream VQ-VAE achieves. Both quantities increase monotonically with $T_{\text{wu}}$, and the gap between them is roughly constant — consistent with hard VQ paying a fixed efficiency cost relative to the rate-distortion-optimal channel.}
    \label{fig:vqgan-waterfilling-ub}
\end{figure}

\paragraph{Procedure.} For each AE warm-up checkpoint, we compute the PCA spectrum of the pre-quantization latent over the full ImageNet-100 (20k) training set — concretely, the eigenvalues of the centered latent covariance, obtained via SVD of the centered latent matrix. We then run reverse water-filling at rates $R \in \{10, 14, 16\}$ bits, matching the codebook sizes used in \cref{tab:vqgan_ablation}. The number of PCA components whose eigenvalue exceeds $D^\star$ gives the predicted active-mode count $m_{\text{wu}}$. \cref{fig:ae-variances-waterfilling-all-cb-pca} visualizes the spectrum and water lines at three checkpoints; \cref{fig:vqgan-waterfilling-ub} compares $m_{\text{wu}}$ to the trained VQ-VAE codebook effective dimension at every $(T_{\text{wu}}, K)$ pair.

\paragraph{Findings.} The water-filling prediction sits strictly above the trained codebook dimension in all 24 settings, validating the theorem's bound qualitatively in the nonlinear regime. The bound also captures the right qualitative behavior in both axes: the predicted dimension grows with $T_{\text{wu}}$ (weak modes activate during AE training and lift above the water line) and with $K$ (higher rate lowers the water line, admitting more modes). Both trends are reproduced by the trained VQ-VAEs.

The gap between prediction and observation is roughly constant in $T_{\text{wu}}$ and shrinks slightly with $K$, suggesting the looseness comes primarily from the hard-VQ-vs-Shannon-channel efficiency gap rather than from a breakdown of the linear analysis. %

\subsection{VQGAN with $\beta=0$}
\label{app:vqgan_beta0}
\input{tables/vqgan_100k_withbeta0}

\begin{figure}[t]
    \centering
    \includegraphics[width=1.0\linewidth]{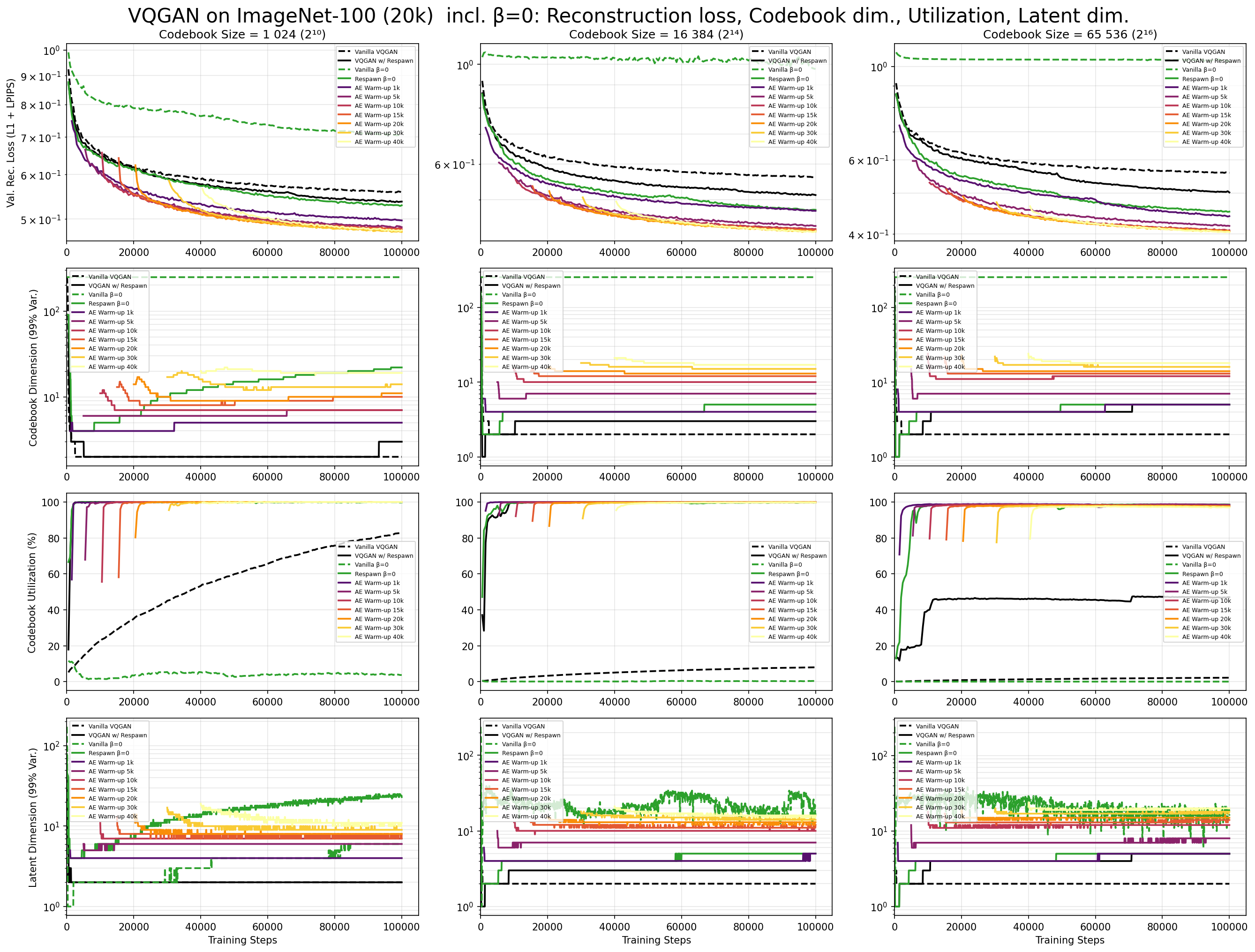}
    \caption{\textbf{VQGAN on ImageNet-100 (20k) with the commitment term removed.} Training curves at $\beta = 0$ (vanilla and respawn variants, dashed) overlaid on the standard $\beta > 0$ runs from \cref{fig:vqgan_100k}. Rows, top to bottom: validation reconstruction loss (L1 + LPIPS), codebook effective dimension, codebook utilization, latent effective dimension. Columns: codebook sizes $K \in \{2^{10}, 2^{14}, 2^{16}\}$. Vanilla VQGAN at $\beta = 0$ collapses to near-zero codebook utilization at every $K$ — almost all codewords are dead from initialization onward, since without the commitment term pulling the encoder toward the codebook, randomly initialized codes are never assigned. Respawn at $\beta = 0$ keeps codes alive and admits a higher latent effective dimension at small $K$ (consistent with the linear-theory escape route in \cref{sec:rdae-dynamics}), but does not translate to a lower reconstruction floor.}
    \label{fig:vgqan_ablation_beta0}
\end{figure}

The linear analysis in \cref{sec:rdae} identified $\beta = 0$ as a degenerate case in which inactive modes are not strictly frozen: at $\beta = 0$, the encoder ODE reads $\dot{u}_j = 2\sigma_j^2 v_j$, which is positive for any $v_j > 0$, so inactive modes can in principle grow back into the active set. The text claims that this escape is closed in practice by a combination of weight decay, normalization, and nonlinearity. \cref{fig:vgqan_ablation_beta0} and \cref{tab:vqgan_ablation_beta0} test this directly by removing the commitment term from VQGAN.

\paragraph{Vanilla VQGAN at $\beta = 0$ fails catastrophically.} Without commitment loss, codebook utilization drops to 3.7\%, 0.3\%, and 0.0\% at $K = 2^{10}, 2^{14}, 2^{16}$ respectively. The codebook effective dimension is reported as 248--254, but this is a degenerate artifact: random codebook initialization spreads codewords across the embedding space, and with no commitment term to pull encoder outputs toward existing codes, almost no codes are ever assigned. The unassigned codes remain at their random init values, inflating the PCA span without contributing to reconstruction. L1 and LPIPS confirm the failure (rFID climbs to 368 at $K = 2^{16}$).

\paragraph{Respawn at $\beta = 0$: partial escape, no win.} With $k$-means initialization and dead-code respawn, $\beta = 0$ does produce a higher latent effective dimension than the standard recipe at small $K$ — codebook dim 22 at $K = 2^{10}$, versus 3 for the $\beta = 0.25$ baseline. This is the predicted escape: without the commitment term pulling the encoder toward the codebook's current span, the encoder does spread into more directions. However, the higher latent dimension does not translate into better reconstruction (L1 0.1566 vs. 0.1484 at $K = 2^{10}$). At larger $K$, even the dimensional advantage disappears; codebook dim reverts to $\sim$5, comparable to the commitment-on respawn baseline.

\paragraph{Reading.} The commitment term plays two coupled roles. It \emph{suppresses} weak modes in the encoder (the mechanism behind collapse) but it also \emph{stabilizes} the encoder--codebook relationship by anchoring encoder outputs to codeword positions, which is what allows respawn to actually inject codewords into the encoder's distribution. Removing it eliminates the suppression but breaks the stabilization, and the net effect on reconstruction is negative. AE warm-up sidesteps this trade-off entirely: weak modes activate \emph{before} VQ is introduced, so neither role of the commitment term is in tension with collapse avoidance during the critical phase. This is why warm-up dominates $\beta = 0$ across every codebook size in \cref{tab:vqgan_ablation_beta0}.

\section{Additional results on WavTokenizer}

\begin{figure}[t]
\centering
\includegraphics[width=1.0\textwidth]{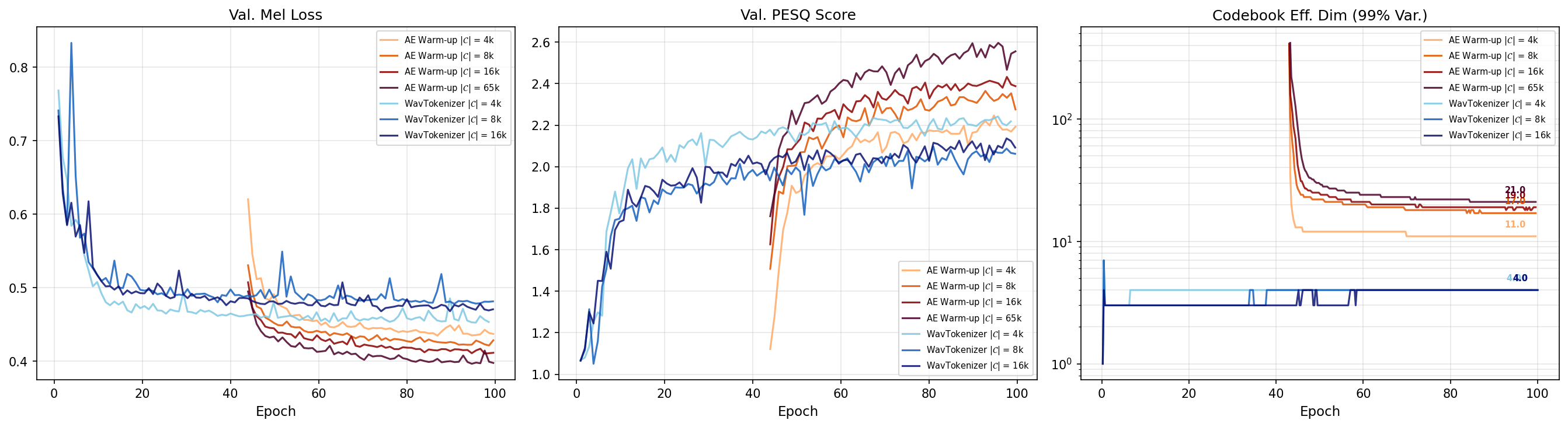}
\caption{\textbf{WavTokenizer on LibriTTS: cold-start training fails to benefit from larger codebooks; AE Warm-up restores codebook scaling.}
Cold-start WavTokenizer (blue curves, $\nK \in \{4\text{k}, 8\text{k}, 16\text{k}\}$) versus AE-VQ (warm curves, $\nK \in \{4\text{k}, 8\text{k}, 16\text{k}, 65\text{k}\}$).
\emph{From left:} validation mel loss, PESQ, codebook $d_\text{eff}$ (99\% variance).
AE Warm-up runs are trained for 43 epochs before introducing VQ.
Cold-start runs show flat loss and low effective dimension regardless of $\nK$; AE-VQ runs show monotone improvement with $\nK$ and substantially higher effective dimension in both latent and codebook spaces.}
\label{fig:wavtokenizer-curves}
\end{figure}

Figure~\ref{fig:wavtokenizer-curves} shows training trajectories for the WavTokenizer runs reported in Table~\ref{tab:wavtok_comparison}. Two observations are visible in the curves but not in the final-checkpoint metrics:

\paragraph{The VQ transition is recoverable.} At epoch 43, when AE Warm-up runs introduce the quantizer, PESQ both degrades sharply for several epochs as the codebook is initialized and the encoder adapts to the quantization channel. All AE Warm-up runs recover within roughly 10 epochs and then continue to improve monotonically through the remaining budget, indicating that the cost of the warm-to-VQ transition is small relative to the gains it unlocks.

\paragraph{Cold-start improvement saturates early.} Cold-start runs reach their best validation Mel loss plateau after the first 20--30 epochs. AE Warm-up runs continue to improve Mel loss throughout post-VQ training and select checkpoints from late in the run. The additional training budget is productive only when the latent has enough effective dimension to absorb it.

%% file: figures/ae_activation_adam.tex
\begin{figure}[t]
\centering
\begin{subfigure}{0.49\textwidth}
    \centering
    \includegraphics[width=0.85\textwidth]{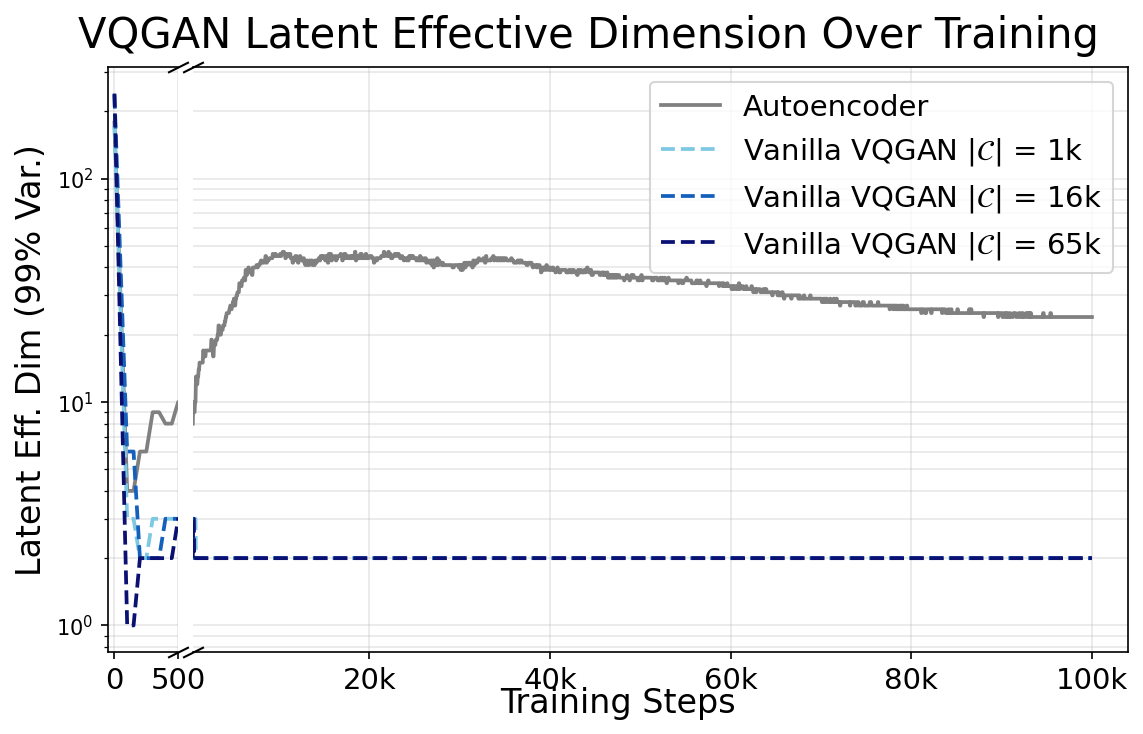}
    \caption{VQGAN on ImageNet-100 (20k)}
    \label{fig:ae_activation_vqgan}
\end{subfigure}
\hfill
\begin{subfigure}{0.49\textwidth}
    \centering
    \includegraphics[width=0.88\textwidth]{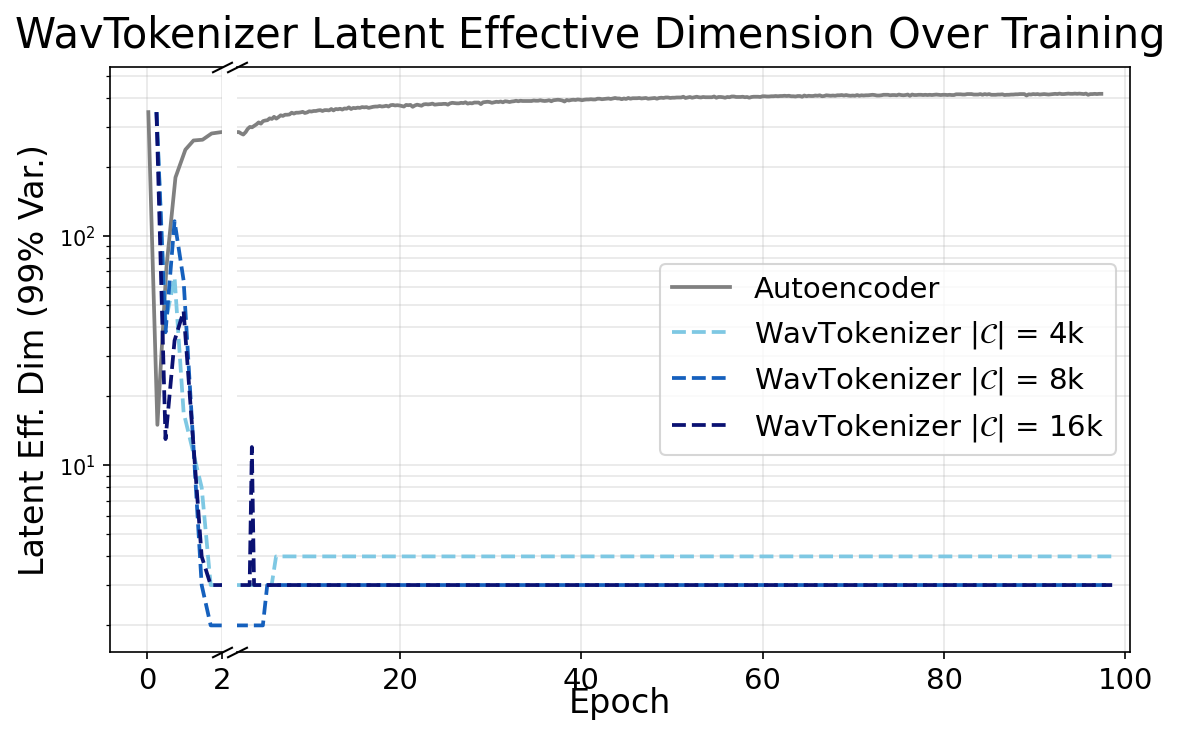}
    \caption{WavTokenizer on LibriTTS.}
    \label{fig:ae_activation_wav}
\end{subfigure}
\caption{\textbf{Without VQ, real architectures exhibit Saxe-like sequential
activation when trained with Adam. VQ causes dimensional collapse.} Latent effective dimension ($d_\mathrm{eff}$, $99\%$-variance
threshold) during autoencoder and standard VQ-VAE training. In both (a) VQGAN and (b) WavTokenizer, standard VQ training pins latent dimension at 2-4, while dimension rises as modes are learned sequentially when training as an autoencoder.
}
\label{fig:ae_activation}
\end{figure}

%% file: figures/vqgan_100k.tex
\begin{figure}[t]
\centering
\includegraphics[width=\textwidth]{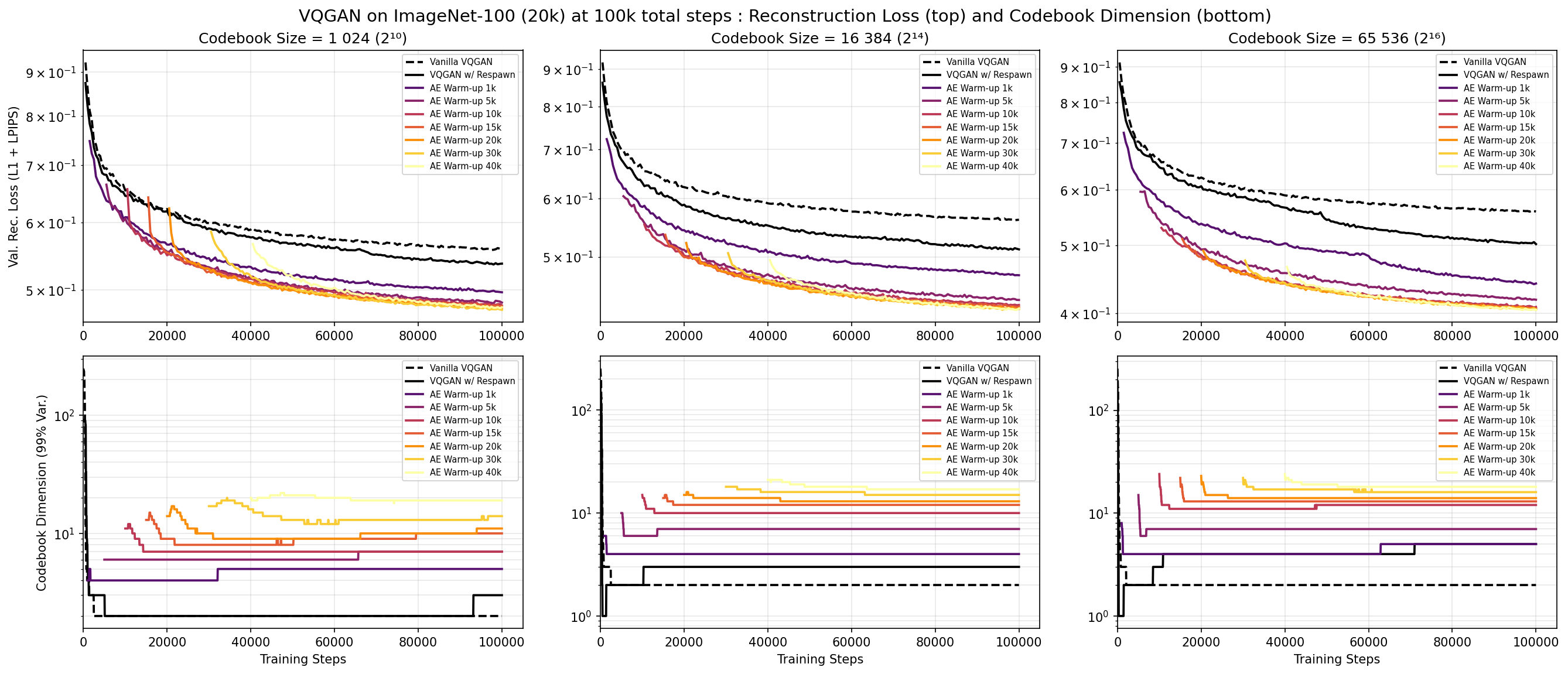}
\caption{\textbf{VQGAN on ImageNet-100: codebook size $\times$ warm-up duration.}
Training runs spanning $\nK \in \{1\text{k},\, 16\text{k}, 65\text{k}\}$ and $\Twu \in \{0,\, 1\text{k},\, 5\text{k},\, 10\text{k},\, 20\text{k},\, 30\text{k},\, 40\text{k}\}$ steps of AE Warm-up, all with k-means init and respawning of the codebook (solid lines), compared with the vanilla VQGAN training recipe with random codebook init and no respawn (dashed line).
\emph{Top row:} validation reconstruction loss
(L1 + LPIPS) versus training step. \emph{Bottom row:} codebook effective
dimension $d_{\mathrm{eff}}$ ($99\%$ variance threshold). Vanilla VQGAN
collapses to $d_{\mathrm{eff}} = 2$ and hits a $K$-independent loss floor
regardless of codebook size. Adding respawn (No Warm-up) partially rescues
codebook utilization and modestly raises $d_{\mathrm{eff}}$, but still
saturates well above the warm-up curves. Each additional AE warm-up phase
further increases the codebook effective dimension and lowers the
reconstruction floor, with the gains amplifying as $K$ grows --- consistent
with the prediction that warm-up, not $K$, controls how many modes survive
the VQ transition.
}
\label{fig:vqgan_100k}
\end{figure}

%% file: figures/vqgan_warmup_100k_scaling.tex
\begin{figure}[t]
  \centering
  \includegraphics[width=1.0\linewidth]{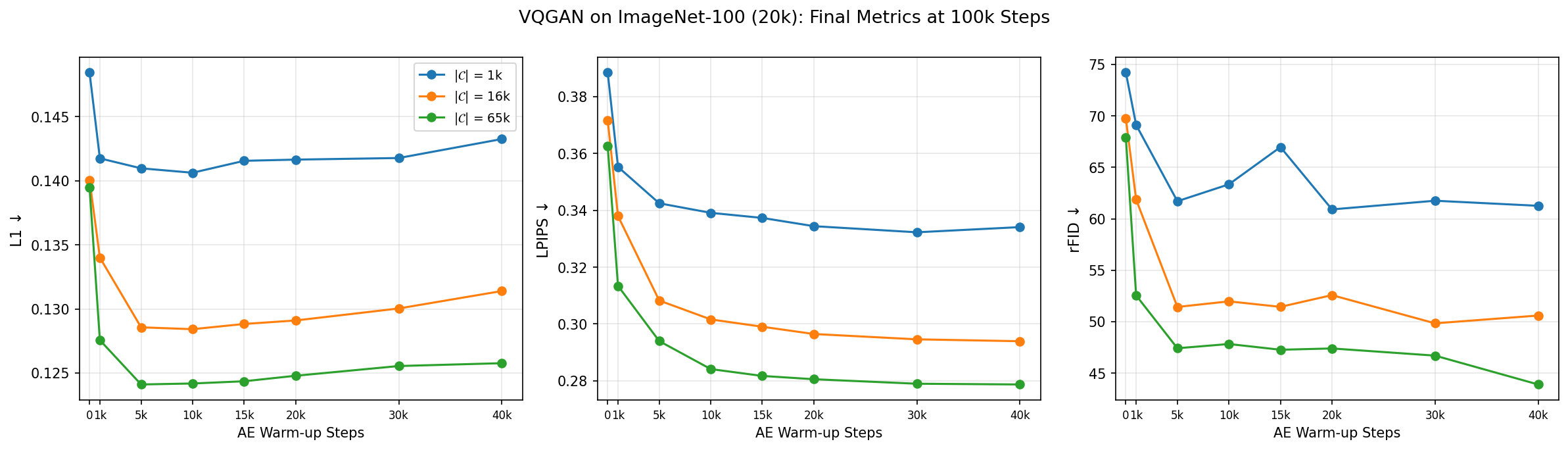}
  \caption{\textbf{AE warm-up vs. codebook-size scaling in VQGAN.}
  Final reconstruction metrics on the ImageNet-100 validation set at step
  $100{,}000$---L1 (left), LPIPS (center), and rFID (right)---as a function of
  AE warm-up length
  $T_{\mathrm{wu}} \in \{0, 1\text{k}, 5\text{k}, 10\text{k}, 15\text{k}, 20\text{k}, 30\text{k}, 40\text{k}\}$,
  for three codebook sizes $K \in \{2^{10}, 2^{14}, 2^{16}\}$.
  $T_{\mathrm{wu}} = 0$ corresponds to the VQGAN w/ Respawn baseline
  ($k$-means initialization plus dead-code respawn, no AE phase).
  At $T_{\mathrm{wu}} = 0$ the three curves are tightly clustered: increasing
  $K$ by $64\times$ yields only a modest reduction in any of the three metrics.
  As $T_{\mathrm{wu}}$ grows, the curves separate into the expected ordering
  $K = 2^{16} < 2^{14} < 2^{10}$ and the gap between them widens, indicating
  that warm-up not only lowers the reconstruction floor but also restores the
  scaling benefit of a larger codebook. The effect is cleanest in LPIPS, which
  decreases monotonically with $T_{\mathrm{wu}}$ at every $K$; L1 and rFID show
  the same overall trend but with more run-to-run noise. Returns diminish past
  $T_{\mathrm{wu}} \approx 20\text{k}$, matching the plateau of the AE-phase
  effective-dimension trajectory in Figure~\ref{fig:ae_activation}.}
  \label{fig:warmup-scaling}
\end{figure}

%% file: tables/vqgan_100k_withbeta0.tex
\begin{table}[!t]
\centering
\scriptsize
\setlength{\tabcolsep}{2pt}

\caption{\textbf{ImageNet-100 (20k) warm-up ablation, with $\beta=0$ baselines.}
L1, LPIPS, and rFID: lower is better. Dim: codebook $d_\text{eff}$ (higher = more of the codebook span is used).
Bold indicates the best value within each codebook size.
Vanilla VQGAN uses uniform codebook init and no respawn; VQGAN w/ Respawn adds k-means init and respawn;
$T_\text{wu} = $ \{1k, \ldots, 40k\} additionally pre-trains the encoder--decoder as a plain autoencoder for the indicated number of steps before introducing the codebook.
The $\beta=0$ rows remove the commitment loss; without it, vanilla VQGAN collapses to near-zero codebook utilization while the encoder is free to span many latent dimensions, and respawn alone partially compensates for utilization but still falls short of warm-up at large $K$.
All runs share the same 100k-step budget, so warm-up trades VQ steps for AE steps.}
\vspace{1em}

\begin{tabular}{l *{15}{r}}
\toprule
& \multicolumn{5}{c}{$|\mathcal{C}| = 2^{10}$}
& \multicolumn{5}{c}{$|\mathcal{C}| = 2^{14}$}
& \multicolumn{5}{c}{$|\mathcal{C}| = 2^{16}$} \\
\cmidrule(lr){2-6} \cmidrule(lr){7-11} \cmidrule(lr){12-16}
Recipe
& Util\% & Dim$\uparrow$ & L1$\downarrow$ & LPIPS$\downarrow$ & rFID$\downarrow$
& Util\% & Dim$\uparrow$ & L1$\downarrow$ & LPIPS$\downarrow$ & rFID$\downarrow$
& Util\% & Dim$\uparrow$ & L1$\downarrow$ & LPIPS$\downarrow$ & rFID$\downarrow$ \\
\midrule
Vanilla VQGAN
& 82.7  & 2 & 0.1530 & 0.4051 & 87.82
& 8.0   & 2 & 0.1532 & 0.4083 & 81.01
& 2.2   & 2 & 0.1525 & 0.4064 & 83.59 \\
\qquad w/ $\beta=0$
& 3.7   & 248 & 0.2100 & 0.4959 & 161.42
& 0.3   & 253 & 0.3856 & 0.5903 & 300.06
& 0.0   & 254 & 0.4399 & 0.5911 & 368.17 \\
w/ Respawn
& 100.0 & 3 & 0.1484 & 0.3885 & 74.21
& 99.9  & 3 & 0.1401 & 0.3717 & 69.74
& 47.0  & 5 & 0.1395 & 0.3626 & 67.89 \\
\qquad w/ $\beta=0$
& 99.8  & 22 & 0.1566 & 0.3723 & 78.01
& 99.8  & 5 & 0.1362 & 0.3383 & 61.32
& 98.6  & 5 & 0.1300 & 0.3220 & 54.54 \\
$T_\text{wu} = $ 1k
& 100.0 & 5 & 0.1417 & 0.3553 & 69.11
& 100.0 & 4 & 0.1340 & 0.3380 & 61.86
& 98.4  & 5 & 0.1276 & 0.3134 & 52.55 \\
$T_\text{wu} = $ 5k
& 100.0 & 7 & 0.1410 & 0.3425 & 61.70
& 100.0 & 7 & 0.1286 & 0.3083 & 51.43
& 98.5  & 7 & \textbf{0.1241} & 0.2940 & 47.41 \\
$T_\text{wu} = $ 10k
& 100.0 & 7 & \textbf{0.1406} & 0.3391 & 63.35
& 99.9  & 10 & \textbf{0.1284} & 0.3016 & 51.97
& 98.0  & 12 & 0.1242 & 0.2841 & 47.83 \\
$T_\text{wu} = $ 15k
& 100.0 & 10 & 0.1416 & 0.3373 & 66.97
& 99.9  & 12 & 0.1288 & 0.2990 & 51.44
& 97.7  & 13 & 0.1244 & 0.2817 & 47.27 \\
$T_\text{wu} = $ 20k
& 100.0 & 11 & 0.1416 & 0.3344 & \textbf{60.91}
& 99.8  & 13 & 0.1291 & 0.2965 & 52.58
& 97.6  & 14 & 0.1248 & 0.2806 & 47.40 \\
$T_\text{wu} = $ 30k
& 100.0 & 14 & 0.1418 & \textbf{0.3323} & 61.75
& 99.8  & 15 & 0.1300 & 0.2946 & \textbf{49.84}
& 97.2  & 16 & 0.1255 & 0.2790 & 46.70 \\
$T_\text{wu} = $ 40k
& 100.0 & 19 & 0.1433 & 0.3341 & 61.26
& 99.8  & 17 & 0.1314 & \textbf{0.2939} & 50.59
& 97.0  & 18 & 0.1258 & \textbf{0.2787} & \textbf{43.89} \\
\bottomrule
\end{tabular}
\label{tab:vqgan_ablation_beta0}
\end{table}

%% file: checklist.tex
\section*{NeurIPS Paper Checklist}

\begin{enumerate}

\item {\bf Claims}
    \item[] Question: Do the main claims made in the abstract and introduction accurately reflect the paper's contributions and scope?
    \item[] Answer: \answerYes{}
    \item[] Justification: The abstract and Section 1 (Contributions) state our four claims---demonstration of dimensional collapse in VQGAN/WavTokenizer (Section 4), the AE Warm-up fix (Section 3.3, Section 4), the RD-AE theoretical model (Section 3.1--3.2), and the warm-up duration bound (Theorem 1, Section 3.3)---each of which is directly supported by the corresponding section.
    \item[] Guidelines:
    \begin{itemize}
        \item The answer \answerNA{} means that the abstract and introduction do not include the claims made in the paper.
        \item The abstract and/or introduction should clearly state the claims made, including the contributions made in the paper and important assumptions and limitations. A \answerNo{} or \answerNA{} answer to this question will not be perceived well by the reviewers. 
        \item The claims made should match theoretical and experimental results, and reflect how much the results can be expected to generalize to other settings. 
        \item It is fine to include aspirational goals as motivation as long as it is clear that these goals are not attained by the paper. 
    \end{itemize}

\item {\bf Limitations}
    \item[] Question: Does the paper discuss the limitations of the work performed by the authors?
    \item[] Answer: \answerYes{}
    \item[] Justification: Section 5 (Discussion and Limitations) discusses the linearity and Gaussian-input assumptions of our theory, the gap between RD-AE and hard VQ, and open questions about combining warm-up with other codebook-geometry techniques. Appendix C.2 additionally discusses the $\beta = 0$ degenerate case where the linear theory's predictions diverge from practice.
    \item[] Guidelines:
    \begin{itemize}
        \item The answer \answerNA{} means that the paper has no limitation while the answer \answerNo{} means that the paper has limitations, but those are not discussed in the paper. 
        \item The authors are encouraged to create a separate ``Limitations'' section in their paper.
        \item The paper should point out any strong assumptions and how robust the results are to violations of these assumptions (e.g., independence assumptions, noiseless settings, model well-specification, asymptotic approximations only holding locally). The authors should reflect on how these assumptions might be violated in practice and what the implications would be.
        \item The authors should reflect on the scope of the claims made, e.g., if the approach was only tested on a few datasets or with a few runs. In general, empirical results often depend on implicit assumptions, which should be articulated.
        \item The authors should reflect on the factors that influence the performance of the approach. For example, a facial recognition algorithm may perform poorly when image resolution is low or images are taken in low lighting. Or a speech-to-text system might not be used reliably to provide closed captions for online lectures because it fails to handle technical jargon.
        \item The authors should discuss the computational efficiency of the proposed algorithms and how they scale with dataset size.
        \item If applicable, the authors should discuss possible limitations of their approach to address problems of privacy and fairness.
        \item While the authors might fear that complete honesty about limitations might be used by reviewers as grounds for rejection, a worse outcome might be that reviewers discover limitations that aren't acknowledged in the paper. The authors should use their best judgment and recognize that individual actions in favor of transparency play an important role in developing norms that preserve the integrity of the community. Reviewers will be specifically instructed to not penalize honesty concerning limitations.
    \end{itemize}

\item {\bf Theory assumptions and proofs}
    \item[] Question: For each theoretical result, does the paper provide the full set of assumptions and a complete (and correct) proof?
    \item[] Answer: \answerYes{}
    \item[] Justification: All assumptions are stated explicitly in Section 3.1--3.2 and in the statement of Theorem 1. Full proofs are provided in Appendix A, with a proof sketch given in the main text. Numerical validation of Theorem 1 against simulation appears in Appendix A.6.1.
    \item[] Guidelines:
    \begin{itemize}
        \item The answer \answerNA{} means that the paper does not include theoretical results. 
        \item All the theorems, formulas, and proofs in the paper should be numbered and cross-referenced.
        \item All assumptions should be clearly stated or referenced in the statement of any theorems.
        \item The proofs can either appear in the main paper or the supplemental material, but if they appear in the supplemental material, the authors are encouraged to provide a short proof sketch to provide intuition. 
        \item Inversely, any informal proof provided in the core of the paper should be complemented by formal proofs provided in appendix or supplemental material.
        \item Theorems and Lemmas that the proof relies upon should be properly referenced. 
    \end{itemize}

    \item {\bf Experimental result reproducibility}
    \item[] Question: Does the paper fully disclose all the information needed to reproduce the main experimental results of the paper to the extent that it affects the main claims and/or conclusions of the paper (regardless of whether the code and data are provided or not)?
    \item[] Answer: \answerYes{}
    \item[] Justification: Appendix B.1 and Appendix B.2 provide complete training details for VQGAN and WavTokenizer respectively. The simulation details for the RD-AE numerical experiments are stated in the captions of Figure 4 and Figure 6. An anonymized code repository is also provided (footnote 1).
    \item[] Guidelines:
    \begin{itemize}
        \item The answer \answerNA{} means that the paper does not include experiments.
        \item If the paper includes experiments, a \answerNo{} answer to this question will not be perceived well by the reviewers: Making the paper reproducible is important, regardless of whether the code and data are provided or not.
        \item If the contribution is a dataset and\slash or model, the authors should describe the steps taken to make their results reproducible or verifiable. 
        \item Depending on the contribution, reproducibility can be accomplished in various ways. For example, if the contribution is a novel architecture, describing the architecture fully might suffice, or if the contribution is a specific model and empirical evaluation, it may be necessary to either make it possible for others to replicate the model with the same dataset, or provide access to the model. In general. releasing code and data is often one good way to accomplish this, but reproducibility can also be provided via detailed instructions for how to replicate the results, access to a hosted model (e.g., in the case of a large language model), releasing of a model checkpoint, or other means that are appropriate to the research performed.
        \item While NeurIPS does not require releasing code, the conference does require all submissions to provide some reasonable avenue for reproducibility, which may depend on the nature of the contribution. For example
        \begin{enumerate}
            \item If the contribution is primarily a new algorithm, the paper should make it clear how to reproduce that algorithm.
            \item If the contribution is primarily a new model architecture, the paper should describe the architecture clearly and fully.
            \item If the contribution is a new model (e.g., a large language model), then there should either be a way to access this model for reproducing the results or a way to reproduce the model (e.g., with an open-source dataset or instructions for how to construct the dataset).
            \item We recognize that reproducibility may be tricky in some cases, in which case authors are welcome to describe the particular way they provide for reproducibility. In the case of closed-source models, it may be that access to the model is limited in some way (e.g., to registered users), but it should be possible for other researchers to have some path to reproducing or verifying the results.
        \end{enumerate}
    \end{itemize}

\item {\bf Open access to data and code}
    \item[] Question: Does the paper provide open access to the data and code, with sufficient instructions to faithfully reproduce the main experimental results, as described in supplemental material?
    \item[] Answer: \answerYes{}
    \item[] Justification: An anonymized code repository containing the experimental code is provided in the abstract footnote (\url{https://anonymous.4open.science/r/vqvae_latent_span_collapse-51BB}). All datasets used (ImageNet-100 and LibriTTS) are publicly available.
    \item[] Guidelines:
    \begin{itemize}
        \item The answer \answerNA{} means that paper does not include experiments requiring code.
        \item Please see the NeurIPS code and data submission guidelines (\url{https://neurips.cc/public/guides/CodeSubmissionPolicy}) for more details.
        \item While we encourage the release of code and data, we understand that this might not be possible, so \answerNo{} is an acceptable answer. Papers cannot be rejected simply for not including code, unless this is central to the contribution (e.g., for a new open-source benchmark).
        \item The instructions should contain the exact command and environment needed to run to reproduce the results. See the NeurIPS code and data submission guidelines (\url{https://neurips.cc/public/guides/CodeSubmissionPolicy}) for more details.
        \item The authors should provide instructions on data access and preparation, including how to access the raw data, preprocessed data, intermediate data, and generated data, etc.
        \item The authors should provide scripts to reproduce all experimental results for the new proposed method and baselines. If only a subset of experiments are reproducible, they should state which ones are omitted from the script and why.
        \item At submission time, to preserve anonymity, the authors should release anonymized versions (if applicable).
        \item Providing as much information as possible in supplemental material (appended to the paper) is recommended, but including URLs to data and code is permitted.
    \end{itemize}

\item {\bf Experimental setting/details}
    \item[] Question: Does the paper specify all the training and test details (e.g., data splits, hyperparameters, how they were chosen, type of optimizer) necessary to understand the results?
    \item[] Answer: \answerYes{}
    \item[] Justification: Section 4.1 and 4.2 give the high-level setup (architecture, codebook sizes, training budget, baselines) for VQGAN and WavTokenizer respectively. Full details are provided in Appendix B.1 (VQGAN) and Appendix B.2 (WavTokenizer).
    \item[] Guidelines:
    \begin{itemize}
        \item The answer \answerNA{} means that the paper does not include experiments.
        \item The experimental setting should be presented in the core of the paper to a level of detail that is necessary to appreciate the results and make sense of them.
        \item The full details can be provided either with the code, in appendix, or as supplemental material.
    \end{itemize}

\item {\bf Experiment statistical significance}
    \item[] Question: Does the paper report error bars suitably and correctly defined or other appropriate information about the statistical significance of the experiments?
    \item[] Answer: \answerYes{}
    \item[] Justification: RD-AE simulations report the median over 128 seeds with cross-seed variance below $10^{-3}$ (Figure 4 caption). VQGAN and WavTokenizer results are single-seed due to compute cost (5.5 hours and 7 days per run, respectively); however, the consistent monotonic trends across 3 codebook sizes $\times$ 7 warm-up durations for VQGAN (21 runs total) and across 4 codebook sizes for WavTokenizer establish the empirical pattern, and qualitative agreement across two modalities corroborates the theoretical predictions.
    \item[] Guidelines:
    \begin{itemize}
        \item The answer \answerNA{} means that the paper does not include experiments.
        \item The authors should answer \answerYes{} if the results are accompanied by error bars, confidence intervals, or statistical significance tests, at least for the experiments that support the main claims of the paper.
        \item The factors of variability that the error bars are capturing should be clearly stated (for example, train/test split, initialization, random drawing of some parameter, or overall run with given experimental conditions).
        \item The method for calculating the error bars should be explained (closed form formula, call to a library function, bootstrap, etc.)
        \item The assumptions made should be given (e.g., Normally distributed errors).
        \item It should be clear whether the error bar is the standard deviation or the standard error of the mean.
        \item It is OK to report 1-sigma error bars, but one should state it. The authors should preferably report a 2-sigma error bar than state that they have a 96\% CI, if the hypothesis of Normality of errors is not verified.
        \item For asymmetric distributions, the authors should be careful not to show in tables or figures symmetric error bars that would yield results that are out of range (e.g., negative error rates).
        \item If error bars are reported in tables or plots, the authors should explain in the text how they were calculated and reference the corresponding figures or tables in the text.
    \end{itemize}

\item {\bf Experiments compute resources}
    \item[] Question: For each experiment, does the paper provide sufficient information on the computer resources (type of compute workers, memory, time of execution) needed to reproduce the experiments?
    \item[] Answer: \answerYes{}
    \item[] Justification: Appendix B.1 specifies the hardware used for VQGAN runs and training times. Appendix B.2 reports the same for WavTokenizer.
    \item[] Guidelines:
    \begin{itemize}
        \item The answer \answerNA{} means that the paper does not include experiments.
        \item The paper should indicate the type of compute workers CPU or GPU, internal cluster, or cloud provider, including relevant memory and storage.
        \item The paper should provide the amount of compute required for each of the individual experimental runs as well as estimate the total compute. 
        \item The paper should disclose whether the full research project required more compute than the experiments reported in the paper (e.g., preliminary or failed experiments that didn't make it into the paper). 
    \end{itemize}
    
\item {\bf Code of ethics}
    \item[] Question: Does the research conducted in the paper conform, in every respect, with the NeurIPS Code of Ethics \url{https://neurips.cc/public/EthicsGuidelines}?
    \item[] Answer: \answerYes{}
    \item[] Justification: The research conforms with the NeurIPS Code of Ethics. The work uses publicly available datasets (ImageNet-100, LibriTTS), does not involve human subjects or personally identifiable information, and the submission is anonymized.
    \item[] Guidelines:
    \begin{itemize}
        \item The answer \answerNA{} means that the authors have not reviewed the NeurIPS Code of Ethics.
        \item If the authors answer \answerNo, they should explain the special circumstances that require a deviation from the Code of Ethics.
        \item The authors should make sure to preserve anonymity (e.g., if there is a special consideration due to laws or regulations in their jurisdiction).
    \end{itemize}

\item {\bf Broader impacts}
    \item[] Question: Does the paper discuss both potential positive societal impacts and negative societal impacts of the work performed?
    \item[] Answer: \answerNA{}
    \item[] Justification: This is foundational research on the training dynamics of vector-quantized autoencoders. The contribution is an analytical framework and training-recipe improvement for an existing class of models; it does not introduce new immediate societal impacts beyond those associated with improving generative modeling more broadly.
    \item[] Guidelines:
    \begin{itemize}
        \item The answer \answerNA{} means that there is no societal impact of the work performed.
        \item If the authors answer \answerNA{} or \answerNo, they should explain why their work has no societal impact or why the paper does not address societal impact.
        \item Examples of negative societal impacts include potential malicious or unintended uses (e.g., disinformation, generating fake profiles, surveillance), fairness considerations (e.g., deployment of technologies that could make decisions that unfairly impact specific groups), privacy considerations, and security considerations.
        \item The conference expects that many papers will be foundational research and not tied to particular applications, let alone deployments. However, if there is a direct path to any negative applications, the authors should point it out. For example, it is legitimate to point out that an improvement in the quality of generative models could be used to generate Deepfakes for disinformation. On the other hand, it is not needed to point out that a generic algorithm for optimizing neural networks could enable people to train models that generate Deepfakes faster.
        \item The authors should consider possible harms that could arise when the technology is being used as intended and functioning correctly, harms that could arise when the technology is being used as intended but gives incorrect results, and harms following from (intentional or unintentional) misuse of the technology.
        \item If there are negative societal impacts, the authors could also discuss possible mitigation strategies (e.g., gated release of models, providing defenses in addition to attacks, mechanisms for monitoring misuse, mechanisms to monitor how a system learns from feedback over time, improving the efficiency and accessibility of ML).
    \end{itemize}
    
\item {\bf Safeguards}
    \item[] Question: Does the paper describe safeguards that have been put in place for responsible release of data or models that have a high risk for misuse (e.g., pre-trained language models, image generators, or scraped datasets)?
    \item[] Answer: \answerNA{}
    \item[] Justification: The paper does not release any new pretrained models, generative systems, or scraped datasets.
    \item[] Guidelines:
    \begin{itemize}
        \item The answer \answerNA{} means that the paper poses no such risks.
        \item Released models that have a high risk for misuse or dual-use should be released with necessary safeguards to allow for controlled use of the model, for example by requiring that users adhere to usage guidelines or restrictions to access the model or implementing safety filters. 
        \item Datasets that have been scraped from the Internet could pose safety risks. The authors should describe how they avoided releasing unsafe images.
        \item We recognize that providing effective safeguards is challenging, and many papers do not require this, but we encourage authors to take this into account and make a best faith effort.
    \end{itemize}

\item {\bf Licenses for existing assets}
    \item[] Question: Are the creators or original owners of assets (e.g., code, data, models), used in the paper, properly credited and are the license and terms of use explicitly mentioned and properly respected?
    \item[] Answer: \answerYes{}
    \item[] Justification: All existing assets used---VQGAN architecture, WavTokenizer architecture, ImageNet-100 split, LibriTTS, LPIPS, PESQ, STOI, UTMOS, and CREPE---are cited in the references and used within their respective licenses for academic research. Architectural and training details inherited from the released codebases are noted explicitly in Appendix B.1 and B.2.
    \item[] Guidelines:
    \begin{itemize}
        \item The answer \answerNA{} means that the paper does not use existing assets.
        \item The authors should cite the original paper that produced the code package or dataset.
        \item The authors should state which version of the asset is used and, if possible, include a URL.
        \item The name of the license (e.g., CC-BY 4.0) should be included for each asset.
        \item For scraped data from a particular source (e.g., website), the copyright and terms of service of that source should be provided.
        \item If assets are released, the license, copyright information, and terms of use in the package should be provided. For popular datasets, \url{paperswithcode.com/datasets} has curated licenses for some datasets. Their licensing guide can help determine the license of a dataset.
        \item For existing datasets that are re-packaged, both the original license and the license of the derived asset (if it has changed) should be provided.
        \item If this information is not available online, the authors are encouraged to reach out to the asset's creators.
    \end{itemize}

\item {\bf New assets}
    \item[] Question: Are new assets introduced in the paper well documented and is the documentation provided alongside the assets?
    \item[] Answer: \answerYes{}
    \item[] Justification: The new asset introduced is the experimental code, released via an anonymized repository (footnote 1). The repository contains training scripts, configuration files, and evaluation code, with documentation describing how to reproduce the VQGAN and WavTokenizer experiments reported in Section 4.
    \item[] Guidelines:
    \begin{itemize}
        \item The answer \answerNA{} means that the paper does not release new assets.
        \item Researchers should communicate the details of the dataset\slash code\slash model as part of their submissions via structured templates. This includes details about training, license, limitations, etc. 
        \item The paper should discuss whether and how consent was obtained from people whose asset is used.
        \item At submission time, remember to anonymize your assets (if applicable). You can either create an anonymized URL or include an anonymized zip file.
    \end{itemize}

\item {\bf Crowdsourcing and research with human subjects}
    \item[] Question: For crowdsourcing experiments and research with human subjects, does the paper include the full text of instructions given to participants and screenshots, if applicable, as well as details about compensation (if any)? 
    \item[] Answer: \answerNA{}
    \item[] Justification: The paper does not involve crowdsourcing or research with human subjects. All evaluations use automated metrics on publicly available datasets.
    \item[] Guidelines:
    \begin{itemize}
        \item The answer \answerNA{} means that the paper does not involve crowdsourcing nor research with human subjects.
        \item Including this information in the supplemental material is fine, but if the main contribution of the paper involves human subjects, then as much detail as possible should be included in the main paper. 
        \item According to the NeurIPS Code of Ethics, workers involved in data collection, curation, or other labor should be paid at least the minimum wage in the country of the data collector. 
    \end{itemize}

\item {\bf Institutional review board (IRB) approvals or equivalent for research with human subjects}
    \item[] Question: Does the paper describe potential risks incurred by study participants, whether such risks were disclosed to the subjects, and whether Institutional Review Board (IRB) approvals (or an equivalent approval/review based on the requirements of your country or institution) were obtained?
    \item[] Answer: \answerNA{}
    \item[] Justification: The paper does not involve human subjects research, so IRB approval is not applicable.
    \item[] Guidelines:
    \begin{itemize}
        \item The answer \answerNA{} means that the paper does not involve crowdsourcing nor research with human subjects.
        \item Depending on the country in which research is conducted, IRB approval (or equivalent) may be required for any human subjects research. If you obtained IRB approval, you should clearly state this in the paper. 
        \item We recognize that the procedures for this may vary significantly between institutions and locations, and we expect authors to adhere to the NeurIPS Code of Ethics and the guidelines for their institution. 
        \item For initial submissions, do not include any information that would break anonymity (if applicable), such as the institution conducting the review.
    \end{itemize}

\item {\bf Declaration of LLM usage}
    \item[] Question: Does the paper describe the usage of LLMs if it is an important, original, or non-standard component of the core methods in this research? Note that if the LLM is used only for writing, editing, or formatting purposes and does \emph{not} impact the core methodology, scientific rigor, or originality of the research, declaration is not required.
    \item[] Answer: \answerNA{}
    \item[] Justification: LLMs were not used as a component of the core methodology. The theoretical analysis, RD-AE framework, AE Warm-up method, and experimental design do not involve LLMs.
    \item[] Guidelines:
    \begin{itemize}
        \item The answer \answerNA{} means that the core method development in this research does not involve LLMs as any important, original, or non-standard components.
        \item Please refer to our LLM policy in the NeurIPS handbook for what should or should not be described.
    \end{itemize}

\end{enumerate}

%% file: references.bib
@book{polyanskiy2025information,
  title={Information theory: From coding to learning},
  author={Polyanskiy, Yury and Wu, Yihong},
  year={2025},
  publisher={Cambridge university press}
}

@inproceedings{vandenoord2017vqvae,
  title     = {Neural Discrete Representation Learning},
  author    = {van den Oord, A{\"a}ron and Vinyals, Oriol and Kavukcuoglu, Koray},
  booktitle = {Advances in Neural Information Processing Systems (NeurIPS)},
  volume    = {30},
  year      = {2017},
}

@inproceedings{razavi2019vqvae2,
  title     = {Generating Diverse High-Fidelity Images with {VQ-VAE-2}},
  author    = {Razavi, Ali and van den Oord, A{\"a}ron and Vinyals, Oriol},
  booktitle = {Advances in Neural Information Processing Systems (NeurIPS)},
  volume    = {32},
  year      = {2019},
}

@inproceedings{esser2021vqgan,
  title     = {Taming Transformers for High-Resolution Image Synthesis},
  author    = {Esser, Patrick and Rombach, Robin and Ommer, Bj{\"o}rn},
  booktitle = {Proceedings of the IEEE/CVF Conference on Computer Vision and Pattern Recognition (CVPR)},
  pages     = {12873--12883},
  year      = {2021},
}

@inproceedings{takida2022sqvae,
  title={SQ-VAE: Variational Bayes on Discrete Representation with Self-annealed Stochastic Quantization},
  author={Takida, Yusuke and Okada, Masato and Murata, Noboru},
  booktitle={International Conference on Machine Learning (ICML)},
  year={2022}
}

@inproceedings{mentzer2024fsq,
  title={Finite Scalar Quantization: VQ-VAE Made Simple},
  author={Mentzer, Fabian and Minnen, David and Agustsson, Eirikur and Tschannen, Michael},
  booktitle={International Conference on Learning Representations (ICLR)},
  year={2024}
}

@article{dhariwal2020jukebox,
  title={Jukebox: A Generative Model for Music},
  author={Dhariwal, Prafulla and Jun, Heewoo and Payne, Christine and Kim, Jong Wook and Radford, Alec and Sutskever, Ilya},
  journal={arXiv preprint arXiv:2005.00341},
  year={2020}
}

@inproceedings{kumar2023dac,
  title={High-Fidelity Audio Compression with Improved {RVQGAN}},
  author={Kumar, Rithesh and Seetharaman, Prem and Luebs, Alejandro and Kumar, Ishaan and Kumar, Kundan},
  booktitle={Advances in Neural Information Processing Systems (NeurIPS)},
  year={2023},
}

@inproceedings{zhang2023speechtokenizer,
  title={{SpeechTokenizer}: Unified Speech Tokenizer for Speech Large Language Models},
  author={Zhang, Xin and Zhang, Dong and Li, Shimin and Zhou, Yaqian and Qiu, Xipeng},
  booktitle={International Conference on Learning Representations (ICLR)},
  year={2024},
}

@article{defossez2024moshi,
  title={Moshi: a speech-text foundation model for real-time dialogue},
  author={D{\'e}fossez, Alexandre and Mazar{\'e}, Laurent and Orsini, Manu and Royer, Am{\'e}lie and P{\'e}rez, Patrick and J{\'e}gou, Herv{\'e} and Grave, Edouard and Zeghidour, Neil},
  journal={arXiv preprint arXiv:2410.00037},
  year={2024},
}

@inproceedings{lancucki2020robust,
  title     = {Robust Training of Vector Quantized Bottleneck Models},
  author    = {{\L}a{\'n}cucki, Adrian and Chorowski, Jan and Sanchez, Guillaume
               and Marxer, Ricard and Chen, Nanxin and Dolfing, Hans J. G. A.
               and Khurana, Sameer and Alumae, Tanel and Laurent, Antoine},
  booktitle = {International Joint Conference on Neural Networks (IJCNN)},
  year      = {2020},
}

@article{zeghidour2021soundstream,
  title   = {{SoundStream}: An End-to-End Neural Audio Codec},
  author  = {Zeghidour, Neil and Luebs, Alejandro and Omber, Ahmed
             and Tagliasacchi, Marco and Borber, Joshua V.},
  journal = {IEEE/ACM Transactions on Audio, Speech, and Language Processing},
  volume  = {30},
  pages   = {495--507},
  year    = {2021},
}

@inproceedings{zheng2023cvqvae,
  title     = {Online Clustered Codebook},
  author    = {Zheng, Cong and Vedaldi, Andrea},
  booktitle = {Proceedings of the IEEE/CVF International Conference on Computer Vision (ICCV)},
  year      = {2023},
}

@inproceedings{zhu2024vqganlc,
  title     = {Scaling the Codebook Size of {VQGAN} to 100,000 with a Utilization Rate of 99\%},
  author    = {Zhu, Lei and Liao, Fangyun and Li, Yanye and Jia, Jiawei and Xie, Lingxi},
  booktitle = {Advances in Neural Information Processing Systems (NeurIPS)},
  year      = {2024},
}

@article{zhao2024representation,
  title   = {Representation Collapsing Problems in Vector Quantization},
  author  = {Zhao, Wentao and Liu, Zijie and Chen, Songlin and Cao, Xiangyun
             and He, Guanghui},
  journal = {arXiv preprint arXiv:2411.16550},
  year    = {2024},
}

@inproceedings{zhu2025simvq,
  title     = {Addressing Representation Collapse in Vector Quantized Models with One Linear Layer},
  author    = {Zhu, Yongxin and Li, Bocheng and Xin, Yifei and Xia, Zhihua and Xu, Linli},
  booktitle = {Proceedings of the IEEE/CVF International Conference on Computer Vision (ICCV)},
  year      = {2025},
  note      = {arXiv:2411.02038},
}

@inproceedings{zhang2025dcvq,
  title     = {Dimensional Collapse in {VQVAEs}: Evidence and Remedies},
  author    = {Zhang, Jiayou and Shen, Yifan and Chen, Guangyi and Song, Le and Xing, Eric},
  booktitle = {Advances in Neural Information Processing Systems (NeurIPS)},
  year      = {2025},
}

@article{earlyquant2026,
  title   = {Early Quantization Shrinks Codebook: A Simple Fix for Diversity-Preserving Tokenization},
  author  = {Zhao, Wenhao and Zou, Qiran and Shah, Rushi and Wu, Yudi
             and Lin, Zhouhan and Liu, Dianbo},
  journal = {arXiv preprint arXiv:2603.17052},
  year    = {2026},
}

@inproceedings{saxe2014exact,
  title     = {Exact Solutions to the Nonlinear Dynamics of Learning in Deep Linear Neural Networks},
  author    = {Saxe, Andrew M. and McClelland, James L. and Ganguli, Surya},
  booktitle = {International Conference on Learning Representations (ICLR)},
  year      = {2014},
}

@inproceedings{refinetti2022dynamics,
  title     = {The dynamics of representation learning in shallow, non-linear autoencoders},
  author    = {Refinetti, Maria and Goldt, Sebastian},
  booktitle = {Proceedings of the 39th International Conference on Machine Learning},
  pages     = {18499--18519},
  year      = {2022},
  editor    = {Chaudhuri, Kamalika and Jegelka, Stefanie and Song, Le and Szepesvari, Csaba and Niu, Gang and Sabato, Sivan},
  volume    = {162},
  series    = {Proceedings of Machine Learning Research},
  month     = {17--23 Jul},
  publisher = {PMLR},
  url       = {https://proceedings.mlr.press/v162/refinetti22a.html}
}

@inproceedings{saeki2022utmos,
  title     = {{UTMOS}: {UTokyo}-{SaruLab} System for {VoiceMOS} Challenge 2022},
  author    = {Saeki, Takaaki and Xin, Detai and Nakata, Wataru and Koriyama, Tomoki and Takamichi, Shinnosuke and Saruwatari, Hiroshi},
  booktitle = {Proc.\ Interspeech},
  pages     = {4521--4525},
  year      = {2022}
}

@inproceedings{zhang2018unreasonable,
  title     = {The Unreasonable Effectiveness of Deep Features as a Perceptual Metric},
  author    = {Zhang, Richard and Isola, Phillip and Efros, Alexei A. and 
               Shechtman, Eli and Wang, Oliver},
  booktitle = {Proceedings of the IEEE Conference on Computer Vision and 
               Pattern Recognition (CVPR)},
  pages     = {586--595},
  year      = {2018}
}

@inproceedings{rix2001pesq,
  title     = {Perceptual Evaluation of Speech Quality ({PESQ})---A New Method for Speech Quality Assessment of Telephone Networks and Codecs},
  author    = {Rix, Antony W. and Beerends, John G. and Hollier, Michael P. and Hekstra, Andries P.},
  booktitle = {Proceedings of the IEEE International Conference on Acoustics, Speech, and Signal Processing (ICASSP)},
  volume    = {2},
  pages     = {749--752},
  year      = {2001}
}

@inproceedings{taal2010stoi,
  title     = {A Short-Time Objective Intelligibility Measure for Time-Frequency Weighted Noisy Speech},
  author    = {Taal, Cees H. and Hendriks, Richard C. and Heusdens, Richard and Jensen, Jesper},
  booktitle = {Proceedings of the IEEE International Conference on Acoustics, Speech, and Signal Processing (ICASSP)},
  pages     = {4214--4217},
  year      = {2010}
}

@inproceedings{morrison2022chunked,
  title     = {Chunked Autoregressive {GAN} for Conditional Waveform Synthesis},
  author    = {Morrison, Max and Kumar, Rithesh and Kumar, Kundan and Seetharaman, Prem and Courville, Aaron and Bengio, Yoshua},
  booktitle = {International Conference on Learning Representations (ICLR)},
  year      = {2022}
}

@inproceedings{kim2018crepe,
  title     = {{CREPE}: A Convolutional Representation for Pitch Estimation},
  author    = {Kim, Jong Wook and Salamon, Justin and Li, Peter and Bello, Juan Pablo},
  booktitle = {Proceedings of the IEEE International Conference on Acoustics, Speech and Signal Processing (ICASSP)},
  pages     = {161--165},
  year      = {2018}
}

@inproceedings{ji2025wavtokenizer,
  title     = {{WavTokenizer}: An Efficient Acoustic Discrete Codec Tokenizer for Audio Language Modeling},
  author    = {Ji, Shengpeng and Jiang, Ziyue and Wang, Wen and Chen, Yifu and Fang, Minghui
               and Zuo, Jialong and Yang, Qian and Cheng, Xize and Wang, Zehan and Li, Ruiqi
               and Zhang, Ziang and Yang, Xiaoda and Huang, Rongjie and Jiang, Yidi
               and Chen, Qian and Zheng, Siqi and Zhao, Zhou},
  booktitle = {International Conference on Learning Representations (ICLR)},
  year      = {2025},
  note      = {arXiv:2408.16532},
}

@inproceedings{tian2020contrastive,
author = {Tian, Yonglong and Krishnan, Dilip and Isola, Phillip},
title = {Contrastive Multiview Coding},
year = {2020},
isbn = {978-3-030-58620-1},
publisher = {Springer-Verlag},
address = {Berlin, Heidelberg},
url = {https://doi.org/10.1007/978-3-030-58621-8_45},
doi = {10.1007/978-3-030-58621-8_45},
booktitle = {Computer Vision – ECCV 2020: 16th European Conference, Glasgow, UK, August 23–28, 2020, Proceedings, Part XI},
pages = {776–794},
numpages = {19},
location = {Glasgow, United Kingdom}
}

@inproceedings{zen2019libritts,
  title={LibriTTS: A Corpus Derived from LibriSpeech for Text-to-Speech},
  author={Zen, Heiga and Dang, Viet and Clark, Rob and Zhang, Yu and Weiss, Ron J. and Jia, Ye and Chen, Zhifeng and Wu, Yonghui},
  booktitle={Proc. Interspeech},
  pages={1526--1530},
  year={2019}
}

@article{wang2025tokenbridge,
  title={Bridging Continuous and Discrete Tokens for Autoregressive Visual Generation},
  author={Wang, Yuqing and Lin, Zhijie and Teng, Yao and Zhu, Yuanzhi and Ren, Shuhuai and Feng, Jiashi and Liu, Xihui},
  journal={arXiv preprint arXiv:2503.16430},
  year={2025}
}
